\documentclass[mnsc]{informs3} 
\OneAndAHalfSpacedXI 

\usepackage{endnotes}
\usepackage[ruled]{algorithm}
\usepackage{algorithmicx,algpseudocode,amsfonts,subfigure}
\usepackage{graphicx,hyperref,comment,appendix}
\usepackage[capitalize,noabbrev]{cleveref}
\usepackage{tikz}
\usepackage{subfigure,caption,microtype,booktabs} 

\usepackage{amsthm,amsmath,amssymb,soul}
\usepackage{comment,enumitem}
\setlist{leftmargin=*}

\newtheorem{theorem}{Theorem}

\newtheorem{lemma}{Lemma}
\newtheorem{proposition}{Proposition}
\newtheorem{remark}{Remark}

\renewcommand{\Re}{\mathbb{R}}
\newcommand{\ie}{\textit{i.e.}}
\newcommand{\eg}{\textit{e.g.}}
\newcommand{\Cov}{\text{Cov}}
\newcommand{\E}{\mathbb{E}}
\renewcommand{\Pr}{\text{Pr}}
\newcommand{\Var}{\text{Var}}

\newcommand{\cE}{\mathcal E}
\newcommand{\istar}{i^*}
\newcommand{\eps}{\varepsilon}

\usepackage{natbib}
\bibpunct[, ]{(}{)}{,}{a}{}{,}%
\def\bibfont{\small}%
\usepackage{bm,comment}
\usepackage{soul}

\ECRepeatTheorems

\EquationsNumberedThrough    

\allowdisplaybreaks
\begin{document}


\RUNAUTHOR{}

\RUNTITLE{Best-Arm Identification with Generative Proxy}



\TITLE{Best-Arm Identification with Generative Proxy}

\ARTICLEAUTHORS{

\AUTHOR{Tianyi Ma}
\AFF{School of Operations Research and Information Engineering, Cornell University\\
	 \EMAIL{tm693@cornell.edu}} 

\AUTHOR{Hanzhang Qin}
\AFF{Department of Industrial Systems Engineering and Management, National University of Singapore \\
    \EMAIL{hzqin@nus.edu.sg}}

\AUTHOR{Ruihao Zhu}
\AFF{SC Johnson College of Business, Cornell University\\
	\EMAIL{ruihao.zhu@cornell.edu}} 

\AUTHOR{Jierui Zuo}
\AFF{Michael G. Foster School of Business, University of Washington\\
\EMAIL{jzuo4@uw.edu}}

} 

\ABSTRACT{%
Best-arm identification is a canonical model for data-driven decision-making, but in many applications each reward observation is costly. Motivated by the growing availability of cheap predictions from machine learning and large language models, we study fixed-confidence best-arm identification in which each costly reward pull is paired with a cheap but correlated proxy score. The marginal mean of the proxy can be estimated offline and is treated as known, whereas its correlation $\rho$ with the reward, which governs how much the proxy helps, is unknown and must be learned online in pair with real rewards. We show that a control-variate adjustment turns this model into a heteroscedastic identification problem whose oracle sample complexity improves by residual variance $1-\rho^2$. The central difficulty is that the correlation must be learned from the same costly samples that identification consumes online, and that a plug-in estimate of the residual variance is anti-conservative and can compromise correctness. We propose \texttt{PROBE} (\underline{PR}oxy \underline{O}LS for \underline{B}est-arm \underline{E}xploration), a phase-elimination algorithm that directly maintains an upper certificate on the residual variance with an ordinary least squares fit, whose exact $\chi^2$ law keeps the certificate valid regardless of the unknown correlation. We prove that \texttt{PROBE} is $\delta$-PAC and attains the known-correlation oracle sample complexity up to a constant multiplicative factor and a constant additive calibration cost. The guarantee extends to the $(\epsilon,\delta)$-PAC setting under minimal changes to the algorithm. Numerical experiments on synthetic instances and on an auto-loan pricing replay with large language model and tabular proxies confirm that the sample savings of \texttt{PROBE} scale with the strength of the reward-proxy correlation, exactly as the theory predicts.

}%


\KEYWORDS{online learning, generative model, dynamic pricing, control variates} 

\maketitle

%

\section{Introduction}\label{sec:intro}

Best-arm identification is a canonical formulation for decision making: among a finite set of alternatives, the decision maker seeks to identify the best one using as few samples as possible facing observation uncertainty. In many applications that motivate best-arm identification, these samples are online and expensive, \eg, a reward observation may require exposing a user to a recommendation, running a pricing experiment, or deploying a candidate policy in a real environment. Fortunately, cheap proxy evaluations for decision units can be produced with modern predictive and generative models. Such proxies are not reliable enough to replace the real reward, but they can be informative about its variation. Following the same philosophy of prediction-powered inference \citep{AngelopoulosEtAl2023}, the cheap proxy prediction can explain part of the uncertainty in an expensive target quantity and thereby reduce the number of costly observations needed for inference. In this paper, we study how to use such proxy scores as control variates in fixed-confidence best-arm identification. These proxy scores finds its place in many real world best-arm identification problems: 
\begin{itemize}
\item \textbf{Recommendation Systems:} In personalized recommendation, item cold-start is a tough problem. When a set of items (or themes) is newly launched to a user, the recsys needs to find which item (or theme) is the favorite of the user by click-through rate or watching time, facing uncertainty in user behavior due to many aspects. However, online samples are expensive as the user needs to be exposed to the items and usually only a few interactions are allowed. With the help of large language model (LLM), we are able to prompt the LLM with the item information, user interaction history, and all context information including session state, time and device, other recommended items, etc., to receive a proxy score, \eg, a score from 0 to 3 where 0 corresponds to no interest and 3 corresponds to very interested. During the preparation for the item launch, LLM can be called for many times for different user and different possible contexts, and we can acquire an accurate offline proxy score for each item, but the correlation between this proxy score and the real CTR or watching time remains unknown. Only after the items (or themes) are launched, the correlation can be estimated at the same time recsys explores the user's favorite. Due to the requirement of high concurrency and low latency, only a few or only one LLM call is available after each user interaction.

\item \textbf{Pricing Policy Selection:} A firm choosing among a set of candidate pricing rules, each mapping a customer to an offered price, need to identify the rule that yields the largest expected revenue. Here, online samples are expensive because each one requires offering a price to a real customer. A predictive model or an LLM-based scorer can instead produce a cheap proxy score for any customer-policy pair. Before the pricing experiment is deployed, this scorer can be queried offline on a large pool of historical contexts, so that the average proxy score of each policy is estimated to high accuracy. The correlation between the proxy score and the realized revenue, however, is not available offline and can only be estimated once the experiment is running. Moreover, because pricing decisions must be made in real time, only a few proxy evaluations, can be afforded for each realized customer interaction.

\end{itemize}

These two examples share the same structure. The proxy is correlated with the real reward and is therefore useful for variance reduction, serving to identify the best arm with fewer costly observations. Underlying both examples is an asymmetry between two kinds of proxy information: proxy-only information is cheap and abundant before the experiment begins, whereas reward-paired proxy information is scarce and must be acquired online. On the offline side, proxy-only scores can be generated in large quantities before the experiment by repeatedly querying the generator or predictive model at essentially no cost, so the marginal proxy mean can be estimated to arbitrary accuracy and treated as known. This does not require the proxy to be calibrated to the true reward mean. On the online side, the proxy information that actually reduces variance is the part paired with the same unit that generates the realized reward, and such paired evaluations are far more costly than offline ones, limited by latency and scalability. 

This asymmetry is the source of both the opportunity and the difficulty in our setting. The opportunity is clear: had the reward-proxy correlation been known in advance, a classical control-variate adjustment by \cite{AngelopoulosEtAl2023} would remove the predictable part of the reward noise, leaving each arm with a smaller residual variance and saving a large number of online samples in proxy-aided best-arm identification. The difficulty is that this correlation, unlike the proxy mean, must be estimated from the same costly reward-proxy paired samples that the identification is trying to economize. This naturally suggests learning the correlation as a first step and reducing the residual noise accordingly. However, aiming at the correlation is myopic. Estimating the correlation and proceeding with the estimated result may trust the observed reward more than warranted and commit to a decision before the arms are reliably separated, which jeopardizes correctness and not merely efficiency. In fact, simply being cautious about the correlation still does not repair this, because a guarantee on the correlation translates poorly into a guarantee on the variance reduction, and the translation degrades exactly when the proxy is most informative. In this paper, we show that such a cheap but unreliable proxy can be turned into a provable reduction in the number of costly online samples needed for best-arm identification.

\subsection{Main Contribution}
In this work, we formulate proxy-augmented fixed-confidence best-arm identification and propose \texttt{PROBE}, \underline{PR}oxy \underline{O}LS for \underline{B}est-arm \underline{E}xploration, an algorithm that turns an unreliable but cheap proxy into provable savings in costly online samples with reward-proxy correlation. Our main contributions are summarized as follows.
\begin{itemize}
\item In Section~\ref{sec:problem_formulation}, we formalize a best-arm identification problem in which each costly reward pull is paired with a cheap proxy score whose marginal mean is known but reward-proxy correlation is not. We show that a control-variate adjustment turns this model into a heteroscedastic BAI problem, in which the proxy reduces the effective noise variance of each arm  from one to the residual variance. This yields a clean oracle benchmark: were the correlation $\rho$ known, the sample complexity would improve by the arm-wise factor $1-\rho^2$. We further show that this single-proxy model is without loss of generality, as a batch of repeated proxy queries at one pull reduces exactly to a single averaged proxy.

\item In Section~\ref{sec:algo}, we design \texttt{PROBE} (\underline{PR}oxy \underline{O}LS for \underline{B}est-arm \underline{E}xploration), a phase-elimination algorithm that learns the variance reduction while performing identification. The central difficulty is that the correlation, and hence the variance reduction it implies, is unknown and must be learned from the same costly samples; a plug-in estimate of the reduced variance is anti-conservative and can compromise correctness. \texttt{PROBE} instead maintains a one-sided \emph{residual-variance upper certificate} obtained from an ordinary least squares fit, whose exact $\chi^2$ law makes the certificate valid regardless of the unknown correlation. A calibration stage and monotone updates keep the certificate within a factor $1+\kappa$ of the true residual variance throughout, and a deliberate one-round lag decouples mean estimation from variance calibration so that the concentration of mean estimates remain Gaussian.

\item In Section~\ref{sec:sample-complexity}, we prove that \texttt{PROBE} is $\delta$-PAC and, with high probability, attains the oracle gap- and variance-dependent sample complexity up to a multiplicative factor $1+\kappa$ and an additive calibration cost of order $\widetilde O(K/\kappa^2)$ that is independent of the gaps and correlations. The parameter $\kappa$ admits a clean interpretation as the price of not knowing the correlation, trading the tightness of the variance certificate against the cost of establishing it. Fixing $\kappa$ to a small constant recovers the known-correlation oracle rate up to constants. We also show that \texttt{PROBE} extends to the $(\epsilon,\delta)$-PAC setting with a minimal change to the stopping rule, preserving the same improvement.

\item In Sections~\ref{sec:experiments} and~\ref{sec:autoloan-replay}, we validate \texttt{PROBE} on synthetic instances and on an auto-loan pricing replay environment. A paired Gaussian benchmark shows that \texttt{PROBE} closely tracks the known-correlation oracle across the whole range of correlations, and a nonlinear environment with proxies learned by standard regression models confirms that the sample saving is governed by the induced reward-proxy correlation rather than by how the proxy is produced. In the replay study, proxy scores generated by large language models and by a structured tabular predictor reduce the fixed-confidence stopping time by up to $61\%$ relative to a reward-only baseline, with the savings tracking the strength of the same-unit reward-proxy correlation.

\end{itemize}

\subsection{Related Work}
\label{subsec:related-work}
\paragraph{Fixed-confidence identification and pure-exploration objectives.}
Our work builds on fixed-confidence best-arm identification (BAI), where an algorithm adaptively samples arms and stops with a recommendation that is correct with probability at least (1-$\delta$). This objective is related to ranking and selection in stochastic simulation \citep{Bechhofer1954,KimNelson2001,ChenLinYucesanChick2000,FrazierPowellDayanik2008,HongFanLuo2021} and to classical BAI algorithms based on elimination, confidence bounds, and information-theoretic sampling \citep{MannorTsitsiklis2004,EvenDar2006,GabillonEtAl2012,AudibertBubeckMunos2010,Karnin2013,Jamieson2014,KaufmannCappeGarivier2016,GarivierKaufmann2016}. Related pure-exploration objectives include top-(m) identification, thresholding, PAC subset selection, and all-($\epsilon$)-good arms \citep{KalyanakrishnanEtAl2012,BubeckWangViswanathan2013,LocatelliGutzeitCarpentier2016,AbernethyAminZhu2016,MasonEtAl2020,LiMaHuaZhu2025,feng2024satisficing}. These works study identification from reward observations alone; we study how paired proxy observations reduce reward-sample complexity.

\paragraph{Model-generated feedback and LLM-assisted decisions.}
Our motivation comes from decision problems where model-generated signals are informative but unreliable. \citet{JiEtAl2025} study regret minimization with correlated surrogate rewards, and related work uses biased offline or model-assisted data to improve bandit learning \citep{YangTanCheung2025,ZhangZhuXie2025}. In experimentation, \citet{AoEtAl2026} study BAI with LLM judges audited by human feedback, and \citet{YeYoganarasimhanZheng2025} use LLMs to cold-start content experiments. Marketing and economics work similarly treats LLM responses as synthetic data and documents both useful preference signals and systematic deviations from humans \citep{BrandIsraeliNgwe2023,GoliSingh2024,GuiToubia2023,WuWangWangZheng2024,LudwigMullainathanRambachan2025,WangZhangZhang2026,YinXin2026}. Different from the regret-minimization setting of \citet{JiEtAl2025}, we study fixed-confidence identification, where the proxy enters through within-arm residualization and fresh-batch OLS mean estimation is separated from \(\chi^2\)-based residual-variance certification.

\paragraph{Structured information in BAI.}
Several pure-exploration models exploit additional structure, including arm-level variances \citep{LuTaoZhang2021}, linear or transductive features \citep{SoareLazaricMunos2014,FiezEtAl2019,JedraProutiere2020,RedaKaufmannDelahayeDuriez2021,RedaTirinzoniDegenne2021,RiveraTewari2024}, and delayed, correlated, or covariance-aware feedback \citep{GroverEtAl2018,GuptaJoshiYagan2021,SaadBlanchardVerzelen2023}. These works change the cross-arm sampling structure. Our proxy is paired with the reward from the same pull, so the useful dependence is within an arm and within a sampled unit.

\paragraph{Control variates and prediction-assisted inference.}
Our variance reduction is a control-variate mechanism \citep{Glasserman2003}, related to stochastic bandits with correlated side observations for regret minimization \citep{VermaHanawal2021,VermaEtAl2023}. It also connects to prediction-powered inference and proxy-data transfer, which use predictions to improve efficiency while correcting bias \citep{AngelopoulosEtAl2023,AngelopoulosDuchiZrnic2023,Bastani2021}. In fixed-confidence BAI, however, the learned residual variance determines future sampling effort. \texttt{PROBE} therefore uses one-sided OLS variance certificates rather than a plug-in estimate of the reward-proxy correlation.

\section{Problem Formulation}\label{sec:problem_formulation}

In this section, we introduce the fixed-confidence best-arm identification problem and the proxy-augmented observation model studied. We first recall the classical benchmark without proxies, then discuss the oracle benchmark in which the reward-proxy correlation is known in advance.

\subsection{BAI Baseline}
We consider a fixed-confidence best-arm identification (BAI) problem in finite-armed bandit with $K$ arms indexed by $i\in[K]=\{1,\dots,K\}$. At each time step $t=1,2,\dots$, the learner pulls an arm $i$ for the $l$-th time and receives a reward $X_{i,l}$. For each arm $i$, the sequence $\{X_{i,l}\}_{l\ge 1}$ are i.i.d. samples from normal distribution $\mathcal{N}(p_i,1)$ with unknown mean $p_i = \E[X_{i,1}]\in[0,1]$, and reward sequences are independent across arms with unit variance. Denote the unique best arm as $\istar=\arg\max_{1\le i\le K}p_i$, the reward gap can be defined for each suboptimal arm $i\ne\istar$ as $\Delta_i=p_{\istar}-p_i$, and we further let $\Delta_{\istar}=\min_{i\neq\istar}\Delta_i$ for notation consistency. Given $\delta\in(0,1)$ The objective of BAI is to find the best arm $\istar$ with probability at least $1-\delta$. To be specific, with single input $\delta$, we need to find an algorithm that adaptively samples arms, stops at time $T$, and returns a recommendation $\hat{i}$, and such algorithm is called $\delta$-PAC (probably approximately correct) if $\Pr(\hat i = \istar)\geq1-\delta$.

The sample complexity of such algorithms is typically governed by the gap-dependent quantity $\sum_{i=1}^K \Delta_i^{-2}$. A standard fixed-confidence BAI baseline to obtain such gap-dependent guarantees is by phase elimination with exponential-gap \citep{EvenDar2006,Karnin2013}. The algorithm maintains an active set of arms and runs in rounds $r=1,2,\ldots$, with a target accuracy $\varepsilon_r$ that decreases geometrically, for example $\varepsilon_r=2^{-r}$. In each round, every active arm is sampled enough times, \eg, $\tilde{O}(\varepsilon_r^{-2})$, to estimate its mean up to order
$\varepsilon_r$. The algorithm then selects a reference arm, either by a robust subroutine such as median elimination \citep{Karnin2013} or simply by the empirical maximum among active arms, and removes arms whose empirical means are worse than the reference value by a constant multiple of $\varepsilon_r$. Intuitively, a suboptimal arm with gap $\Delta_i$ survives only until the first round in which $\varepsilon_r$ is smaller than a constant multiple of $\Delta_i$. Then, such algorithm identifies the best arm with sample complexity of order
\begin{equation}
\sum_{i=1}^K
\frac{1}{\Delta_i^2}
\log\left(\frac{1}{\delta}\log\left(\frac{1}{\Delta_i}\right)\right),
\label{eq:PE-bound-base}
\end{equation}
up to universal constants and lower-order logarithmic terms.

\subsection{BAI with proxy scores}

We now augment each reward observation with a proxy score. At each time arm $i$ is pulled (for the $l$-th time), the learner observes a paired reward $(X_{i,l},Y_{i,l})$, where $X_{i,l}$ is the noisy version of the unknown real reward $p_i$ and $Y_{i,l}$ is the proxy score (or reward) on the the same arm $i$. The proxy score may be produced by a predictive model, a simulator, or an AI evaluator, \eg, a pre-trained or fine-tuned AI models like large language models (LLMs). For each arm \(i\), the pairs \(\{(X_{i,s},Y_{i,s})\}_{s\ge 1}\) are i.i.d., and the sequences are independent across arms. 

The proxy score is informative through its statistical dependence with the true reward: it can help estimate $p_i$ by its correlation with $X_{i,l}$, even if its own mean is not calibrated to the reward mean. Formally, for each arm $i$, the reward and proxy score as a bivariate Gaussian pair \citep{JiEtAl2025} as
\begin{equation}
\begin{pmatrix}
    X_{i,l}\\
    Y_{i,l}
\end{pmatrix}
\sim
\mathcal{N}\!\left(
    \begin{pmatrix}
        p_i\\
        m_i
    \end{pmatrix},
    \begin{pmatrix}
        1 & \rho_i\\
        \rho_i & 1
    \end{pmatrix}
\right),   \qquad \rho_i\in(-1,1). 
\label{eq:proxy-model}
\end{equation}
Here, $p_i=\E[X_{i,1}]$ is the unknown reward mean to be learned, while $m_i=\E[Y_{i,1}]$ is known and used only to center the proxy score, \eg, by define $W_{i,l} = Y_{i,l}-m_i$ we have $W_{i,l}$ with zero mean. The correlation coefficient $\rho_i=\Cov(X_{i,l},Y_{i,l})$ captures the quality of the proxy information, where larger $|\rho_i|$ means that the centered proxy $W_{i,l}$ carries more information about the reward noise. Importantly, we do not require $m_i=p_i$, or even that $m_i$ be close to $p_i$. The proxy may therefore be biased as a direct surrogate for reward, as long as its centered variation is informative about $X_{i,l}$.

\begin{remark}[Proxy asymmetry between offline and online]\label{rmk:asymmetry}
The formulation in \eqref{eq:proxy-model} reflects the offline-online asymmetry in proxy scores highlighted in Section~\ref{sec:intro}, which makes the proxy mean $m_i$ treated as known while leaving the correlation $\rho_i$ to be learned.

\noindent\textbf{Known proxy mean.} Before the online phase, proxy calls are cheap, so the generator or predictive model can be queried repeatedly on each arm to estimate the marginal mean $m_i$ to arbitrary accuracy, and we therefore treat $m_i$ as known. The correlation $\rho_i$, by contrast, is a joint property of the reward and the proxy on the same online unit due to shared underlying features or explicit contexts, and can only be estimated from the costly paired samples drawn during the online phase.

\noindent\textbf{Single proxy per pull.} During the online phase, latency and concurrency constraints permit only a restricted number of proxy calls per interaction. Assuming a single proxy per pull is nonetheless without loss of generality: when the proxy is repeatedly called independently, they can be averaged into one control variate, as we formalize in Proposition~\ref{prop:vector-proxy-reduction}.
\end{remark}

In bandit problems, the estimation of mean rewards plays an important role. With the help of proxy scores, the variance of estimation can be reduced through a control-variate adjustment \citep{Glasserman2003}. To be specific, for each arm $i$ with unknown mean $p_i$, we can define the adjusted observation
\begin{equation}
X_{i,l}-\beta(Y_{i,l}-m_i)
\label{eq:control_variate}
\end{equation}
based on observed arm reward $X_{i,l}$ and proxy score $Y_{i,l}$ for any coefficient $\beta\in\Re$.
\begin{proposition}\label{prop:define-var-reduce}
For any coefficient $\beta\in\Re$, the adjusted observation with proxy score is unbiased with variance
$$
\Var(X_{i,l}-\beta(Y_{i,l}-m_i)) = 1+\beta^2-2\rho_i\beta.
$$
The variance is minimized at $\beta=\rho_i$, in which case
$$
\Var(X_{i,l}-\rho_i(Y_{i,l}-m_i)) = 1 - \rho_i^2=:v_i.
$$
\end{proposition}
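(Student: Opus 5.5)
The plan is a direct second-moment computation under the bivariate Gaussian model \eqref{eq:proxy-model}. Gaussianity is not needed: only the first two moments of $(X_{i,l},Y_{i,l})$ enter, namely $\Var(X_{i,l})=\Var(Y_{i,l})=1$ and $\Cov(X_{i,l},Y_{i,l})=\rho_i$.

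\emph{Unbiasedness.} By linearity of expectation,
\[
\E[X_{i,l}-\beta(Y_{i,l}-m_i)] = p_i-\beta(\E[Y_{i,l}]-m_i)=p_i,
\]
since $m_i=\E[Y_{i,1}]$ and the pairs are identically distributed across $l$. This holds for every $\beta\in\Re$, and it uses only that $m_i$ is the true proxy mean, not that $m_i=p_i$.

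\emph{Variance.} Subtracting the constant $\beta m_i$ does not change the variance. Expanding the variance of a linear combination,
\[
\Var(X_{i,l}-\beta Y_{i,l})=\Var(X_{i,l})+\beta^2\Var(Y_{i,l})-2\beta\Cov(X_{i,l},Y_{i,l}).
\]
Substituting the moments gives $1+\beta^2-2\rho_i\beta$.

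\emph{Minimization.} Complete the square:
\[
1+\beta^2-2\rho_i\beta=(\beta-\rho_i)^2+1-\rho_i^2.
\]
The function is a strictly convex quadratic in $\beta$, so its unique minimizer is $\beta=\rho_i$ and its minimum value is $1-\rho_i^2=v_i$. Because $\rho_i\in(-1,1)$, we have $v_i\in(0,1]$, so the residual variance is well defined and positive.

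No step is a real obstacle; the only care needed is to note that the centering by the known $m_i$ gives unbiasedness for every $\beta$, including the optimal one.
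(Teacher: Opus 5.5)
Your proposal is correct and follows the same route as the paper: linearity of expectation for unbiasedness, expansion of the variance of a linear combination, and minimization of the resulting quadratic in $\beta$. The only addition is that you complete the square as $(\beta-\rho_i)^2+1-\rho_i^2$, which spells out the minimization step that the paper asserts without detail.
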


Note that in current model \eqref{eq:proxy-model} and \eqref{eq:control_variate}, we have assumed a single proxy per pull. Although in real world application, proxy calls can be repeated more than once, if the samples share the same underlying feature or explicit context, more proxy scores are redundant without additional correlation information. Otherwise, the averaged proxy is a sufficient summary of the whole proxy vector, discussed in detail in Section~\ref{sec:independent-proxy} in the appendix.

If $\rho_i$ is known before sampling, then the control-variate adjusted random variable $X_{i,l}-\rho_i(Y_{i,l}-m_i)$ is a Gaussian observation with mean $p_i$ and residual variance $v_i=1-\rho_i^2$. Consequently, the fixed-confidence best-arm identification problem with known-correlation proxy reduces to a heteroscedastic BAI problem in which arm $i$ has noise variance $v_i$ in place of one. To be specific, the variance of each arm $i$ is reduced from $1$ to $1-\rho_i^2$. Rerunning the phase-elimination argument behind \eqref{eq:PE-bound-base} with each arm's sampling times scaling by its own residual variance $v_i$, the baseline complexity with known proxy becomes
\begin{equation}
\sum_{i=1}^K
\frac{1-\rho_i^2}{\Delta_i^2}
\log\left(\frac{1}{\delta}\log\left(\frac{1}{\Delta_i}\right)\right),
\label{eq:PE-bound-known}
\end{equation}
up to universal constants and lower-order logarithmic terms, and serves as the oracle benchmark. This shows the informative value of a proxy: when a proxy can explain part of the reward fluctuation for the same sampled unit, the variance of remaining noise reduces from one to $1-\rho_i^2$ after subtracting this predictable component. As how much fluctuation is explained can be measured by the correlation $\rho_i$, a highly correlated proxy improves BAI by reducing the residual uncertainty in the estimation of mean reward.

Despite the $1-\rho_i^2$ reduction in sample complexity given in \eqref{eq:PE-bound-known}, the main challenge is that the correlation $\rho_i$, and hence the residual variance $1-\rho_i^2$ that the oracle schedule relies on, is not known in advance. As we will discuss in Remark~\ref{remark:compare-v-rho-estimation}, estimating $\rho_i$ and plugging it into $1-\hat\rho_i^2$ is not enough as $|\rho_i|$ might be overestimated and the sampling is not enough before an elimination decision. A conservative bound on $\rho_i$ cannot yields a reliable certificate for the residual variance either especially when $\rho_i$ is large. Our algorithm instead estimates and certifies the residual variance directly through a linear-regression formulation, which we develop next.

\section{\texttt{PROBE}: Proxy OLS for Best-arm Exploration}\label{sec:algo}
In this section, we propose our algorithm \texttt{PROBE} that learns the control-variate coefficient while maintaining conservative residual-variance upper certificates. The algorithm uses ordinary least squares (OLS) to estimate the reward mean and the reward-proxy correlation. The residual-variance estimate is then converted into a residual-variance upper certificate that determines future batch sizes.

\subsection{OLS Preliminaries}

Under~\eqref{eq:proxy-model}, and plugging into $W_{i,l} = Y_{i,l}-m_i$, we note that control-variate adjusted observation in \eqref{eq:control_variate} of each pull of arm $i$ can be written as the regression model
\[
X_{i,l}=p_i+\rho_i W_{i,l}+\varepsilon_{i,l},
\qquad
W_{i,l}\sim \mathcal{N}(0,1),\qquad
\varepsilon_{i,l}\sim \mathcal{N}(0,v_i),
\]
where $v_i=1-\rho_i^2$ and $W_{i,l}$ is independent of $\varepsilon_{i,l}$. Here, $\rho_i$ is the slope, and $p_i$ is the intercept in OLS. Fix arm $i$ and a fresh batch of size $s\ge 3$, with observations $\{(X_{i,l},W_{i,l})\}_{l=1}^s$, the OLS estimator is
\begin{equation}
(\hat p_i,\hat\rho_i)=\arg\min_{a,b}\sum_{l=1}^s (X_{i,l}-a-bW_{i,l})^2.
\label{eq:ols}
\end{equation}
Equivalently, with $\bar X_i=s^{-1}\sum_{l=1}^s X_{i,l}$ and
$\bar W_i=s^{-1}\sum_{l=1}^s W_{i,l}$,
\[
\hat\rho_i=
\frac{\sum_{l=1}^s(X_{i,l}-\bar X_i)(W_{i,l}-\bar W_i)}
{\sum_{l=1}^s(W_{i,l}-\bar W_i)^2},
\qquad
\hat p_i=\bar X_i-\hat\rho_i \bar W_i.
\]
Let
\[
h_i=\frac{1}{s}+\frac{\bar W_i^2}{\sum_{l=1}^s(W_{i,l}-\bar W_i)^2},
\]
and define the unbiased OLS residual-variance estimator for arm $i$ by
\begin{equation}
\hat v_i=\frac{1}{s-2}\sum_{l=1}^s(X_{i,l}-\hat p_i-\hat\rho_i W_{i,l})^2.
\label{eq:ols-var}
\end{equation}
Then, the OLS estimator $\hat{p}_i$ and $\hat{v}_i$ satisfy the following statistical backbone of the algorithm.
\begin{proposition}\label{prop:ols-concentration}
For any fixed arm $i$, conditional on $W_{i,1},\ldots,W_{i,s}$, the OLS intercept satisfies
\[
\hat p_i-p_i\sim \mathcal{N}(0,v_i h_i).
\]
Moreover,
\[
\frac{(s-2)\hat v_i}{v_i}\sim \chi^2_{s-2}.
\]
\end{proposition}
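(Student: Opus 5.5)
The plan is to reduce everything to the classical Gaussian linear model with a fixed design, by conditioning on the proxies. First I will make the conditional structure explicit. Under \eqref{eq:proxy-model} we have $W_{i,l}\sim\mathcal N(0,1)$, and the conditional law of $X_{i,l}$ given $W_{i,l}$ is $\mathcal N(p_i+\rho_iW_{i,l},\,1-\rho_i^2)$; this is the standard Gaussian conditioning formula. Hence $\varepsilon_{i,l}:=X_{i,l}-p_i-\rho_iW_{i,l}$ is $\mathcal N(0,v_i)$ and independent of $W_{i,l}$. Since the pairs are i.i.d. across $l$, conditional on $W:=(W_{i,1},\dots,W_{i,s})$ the vector $\varepsilon=(\varepsilon_{i,1},\dots,\varepsilon_{i,s})$ is $\mathcal N(0,v_iI_s)$.

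Next I will write the model in matrix form, $X=D\theta+\varepsilon$, with design $D=[\mathbf 1,\,W]\in\Re^{s\times2}$ and $\theta=(p_i,\rho_i)^\top$. The design has full column rank unless all $W_{i,l}$ are equal, which is a null event. The OLS estimator in \eqref{eq:ols} is then $\hat\theta=(D^\top D)^{-1}D^\top X=\theta+(D^\top D)^{-1}D^\top\varepsilon$. Conditional on $W$, this is Gaussian with mean $\theta$ and covariance $v_i(D^\top D)^{-1}$.

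For the intercept, I will compute the $(1,1)$ entry of $(D^\top D)^{-1}$. Set $S_{WW}=\sum_l(W_{i,l}-\bar W_i)^2$. Then
\[
D^\top D=\begin{pmatrix}s & s\bar W_i\\ s\bar W_i & \sum_l W_{i,l}^2\end{pmatrix},
\qquad \det(D^\top D)=sS_{WW},
\]
so the $(1,1)$ entry is $\sum_l W_{i,l}^2/(sS_{WW})$. Using $\sum_l W_{i,l}^2=S_{WW}+s\bar W_i^2$, this equals $1/s+\bar W_i^2/S_{WW}=h_i$. This gives $\hat p_i-p_i\mid W\sim\mathcal N(0,v_ih_i)$.

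For the variance, the residual vector is $X-D\hat\theta=(I-H)\varepsilon$, where $H=D(D^\top D)^{-1}D^\top$ is the orthogonal projection onto a 2-dimensional subspace. So $I-H$ is a projection of rank $s-2$. The residual sum of squares is $\varepsilon^\top(I-H)\varepsilon$. Diagonalizing $I-H=Q\,\mathrm{diag}(1,\dots,1,0,0)Q^\top$ with $Q$ orthogonal (depending only on $W$), and using rotation invariance of $\mathcal N(0,v_iI)$, we get $\varepsilon^\top(I-H)\varepsilon/v_i\mid W\sim\chi^2_{s-2}$. This conditional law does not depend on $W$, so it also holds unconditionally, which gives $(s-2)\hat v_i/v_i\sim\chi^2_{s-2}$.

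As a by-product, $\hat\theta$ and the residual depend on $\varepsilon$ through the orthogonal pieces $H\varepsilon$ and $(I-H)\varepsilon$. They are therefore conditionally independent, which the later analysis will likely use.

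The only delicate point is the first step: we must verify that conditioning on $W$ leaves $\varepsilon$ exactly i.i.d. Gaussian with variance $v_i$, uniformly in $W$. This holds because $\varepsilon_{i,l}$ is independent of $W_{i,l}$ under joint Gaussianity, and by i.i.d.-ness across $l$ it is also independent of the other $W_{i,l'}$. Everything after that is the textbook fixed-design OLS computation.
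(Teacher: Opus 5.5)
Your proposal is correct and follows essentially the same route as the paper: condition on the proxies to get a fixed-design Gaussian linear model, read off the intercept variance from the $(1,1)$ entry of $(Z^\top Z)^{-1}$, and obtain the $\chi^2_{s-2}$ law from the rank-$(s-2)$ residual projection. You also explicitly check that the regression errors are independent of the proxies and note that the $\chi^2$ law holds unconditionally; these are small but welcome additions that the paper leaves implicit.
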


\begin{remark}[Estimate $\hat{v}_i$ instead of $\hat{\rho}_i$]
The $\chi^2$-distribution of $\hat{v}_i/v_i$ plays an important role in our algorithm that bypasses the challenge caused by estimating $\hat{\rho}_i$ to be plugged into $v_i$. To be specific, if we only estimate $\hat{\rho}_i$, then
\begin{itemize}
\item Overestimating ${\rho}_i$ is problematic: When $\rho_i$ (assumed to be positive without loss of generality) is overestimated by an error $e>0$, \ie, $\hat{\rho}_i=\rho_i+e$, we have $1-\hat{\rho}_i^2\approx v_i-2\rho e < v_i$. Because the correctness of $\delta$-PAC usually requires at least $\tilde{O}(v_i/\eps_r^2)$ samples to distinguish a reward gap of width $\eps_r$, the sample size based on $1-\hat{\rho}_i^2$ is less than required and the correctness then collapses due to lack of confidence. The underlying problem of naive plug-in is that $1-\hat{\rho}_i^2$ is biased estimator for the residual variance $v_i$, \ie, $\E[1-\hat{\rho}_i^2]=1-\rho_i^2-\Var(\hat{\rho}_i)<v_i$.

\item Conservative LCB of $\hat{\rho}_i$ also fails: If we simply use a lower confidence bound of $\hat{\rho}_i$ to avoid overestimating, sample complexity might explode when the proxy is informative. For example, we use $\hat{\rho}_i-\text{rad}$ to plug into the residual variance where $\text{rad}=\tilde{O}\left(s^{-1/2}\right)$ is the confidence radius under sample size $s$. With similar order of confidence provided by $\text{rad}$, $\rho_i-\text{rad}/2 \leq \hat{\rho}_i \leq \rho_i+\text{rad}/2$ holds and implies $\hat{\rho}_i-\text{rad}\leq\rho_i-\text{rad}/2$. Because we still want $1-(\hat{\rho}_i-\text{rad})^2$ close enough to the real residual variance $v_i$ from above (otherwise it cannot reduce sample complexity and might collapse to the case without proxy), we require, \eg, $1-(\hat{\rho}_i-\text{rad})^2\leq (1+\kappa)v_i$ then $1-({\rho}_i-\text{rad}/2)^2\leq (1+\kappa)v_i$. This implies $s\geq\tilde{O}(\rho_i^2\kappa^{-2}v_i^{-2})$. When $\rho_i$ is close to 1 and $v_i$ is close to 0, the sample size $s$ diverges to infinite for any fixed constant $\kappa$. 

\end{itemize}
Instead, the advantages of estimation and concentration on $\hat{v}_i$ are 
\begin{itemize}
\item  $\hat{v}_i$ is an unbiased estimator for $v_i$. 

\item The $\chi^2$-distribution provides a scale-free concentration for $\hat{v}_i/v$.
\end{itemize}

\label{remark:compare-v-rho-estimation}
\end{remark}

\subsection{The \texttt{PROBE} Algorithm}

For a batch of size $s\ge 3$, define the \textit{residual-variance upper certificate} given residual-variance estimation $\hat{v}$ as
\begin{equation}
U(s,\hat v,\eta)=
\begin{cases}
\displaystyle
\frac{\hat v}{1-2\sqrt{\frac{\log(1/\eta)}{s-2}}},
&\text{if }s-2>4\log(1/\eta),\\[2ex]
+\infty,
&\text{otherwise.}
\end{cases}
\label{eq:U_def}
\end{equation}
The lower tail of the $\chi^2$-distribution and Proposition~\ref{prop:ols-concentration} imply that $v_i\le U(s,\hat v_i,\eta)$ with probability at least $1-\eta$. The algorithm uses this residual-variance upper certificate rather than the plug-in estimate $\hat v_i$.

We now present \texttt{PROBE}, short for \underline{PR}oxy \underline{O}LS for \underline{B}est-arm \underline{E}xploration, in Algorithm~\ref{alg:probe}. The algorithm has two parts. First, it draws a calibration batch from every arm to initialize a conservative upper bound $U_{i,0}$ on the residual variance $v_i$. Second, it runs phased elimination. In each round $r$, active arms with smaller residual-variance upper certificates $U_{i,r}$ receive smaller fresh batches, and the OLS intercept from the fresh batch is used as the reward-mean estimate for elimination.
\begin{algorithm}[!ht]
\caption{\texttt{PROBE}}
\label{alg:probe}
\begin{algorithmic}
\State \textbf{Input:} $\delta$, $K$, $\kappa$
\State \textbf{Initialization:} $t_{\rm cal}\leftarrow2+\left\lceil{64\log(8K/\delta)}/{\kappa^2}\right\rceil$, $S_1\leftarrow\{1,\ldots,K\}$, $r\leftarrow 1$, $s_{i,r}\leftarrow1+t_{\rm cal}$.
\For{Each arm $i\in[K]$}
    \State Pull arm $i$ for $t_{\rm cal}$ times and compute the OLS residual variance $\hat{v}_{i,0}$.
    \State Set $U_{i,0}\leftarrow U(t_{\rm cal}, \hat{v}_{i,0}, \delta/(8K))$.
\EndFor

\While{$|S_r|>1$}
\State Set $\delta_r\leftarrow\delta/(16Kr^2)$, $\eps_r\leftarrow2^{-r}$.
\For{Each active arm $i\in S_r$}
    \State Set batch size $t_{i,r} \leftarrow \max\left\{3,
    \left\lceil 1+16\log({2}/{\delta_r})\right\rceil,
    \left\lceil 8\log({4}/{\delta_r})\right\rceil,
    \left\lceil 64U_{i,r-1}\log({2}/{\delta_r})\eps_r^{-2}\right\rceil \right\}$.
    \State Pull arm $i$ for $t_{i,r}$ times.
    \State Observe arm rewards $\{X_{i,s_{i,r}},\dots,X_{i,s_{i,r}+t_{i,r}-1}\}$ and proxy scores $\{Y_{i,s_{i,r}},\dots,Y_{i,s_{i,r}+t_{i,r}-1}\}$.
    \State Compute the OLS estimator
    $$
    (\hat{p}_{i,r},\hat{\rho}_{i,r})\leftarrow\arg\min_{a,b}\sum_{l=s_{i,r}}^{s_{i,r}+t_{i,r}-1}(X_{i,l}-a-b(Y_{i,l}-m_i))^2.
    $$
    \State Compute the residual variance $\hat{v}_{i,r}$
    $$
    \hat{v}_{i,r}\leftarrow\frac1{t_{i,r}-2}\sum_{l=s_{i,r}}^{s_{i,r}+t_{i,r}-1}(X_{i,l}-\hat{p}_{i,r}-\hat{\rho}_{i,r}(Y_{i,l}-m_i))^2
    $$
    \State Set $\widetilde U_{i,r} \leftarrow U(t_{i,r},\hat{v}_{i,r},\delta_r)$, $U_{i,r} \leftarrow \min\{U_{i,r-1},\widetilde U_{i,r}\}$, $s_{i,r+1}\leftarrow s_{i,r}+t_{i,r}$.
\EndFor

\State Let $i_r\in\arg\max_{i\in S_r}\hat{p}_{i,r}$, set $S_{r+1} \leftarrow S_r\setminus \left\{i\in S_r: \hat{p}_{i,r}<\hat{p}_{i_r,r}-\eps_r \right\}$.
\State $r\leftarrow r+1$.
\EndWhile
\end{algorithmic}
\end{algorithm}

\begin{remark}[Key design choices of \texttt{PROBE}]\label{remark:probe-novelty}
The correctness and efficiency of \texttt{PROBE} rest on the interplay of three design choices: a residual-variance upper certificate that replaces the plug-in estimate, a monotone update that keeps this certificate valid throughout, and a one-round lag that decouples mean estimation from variance calibration.

\noindent\textbf{Residual-variance upper certificate.} The upper certificate $U_{i,r}$ defined in \eqref{eq:U_def} is pivotal because it sandwiches the residual variance as $v_i\le U_{i,r-1}\le(1+\kappa)v_i$. The lower guarantee $v_i\le U_{i,r-1}$ delivers correctness: the round-$r$ batch is then large enough to estimate the OLS intercept $\hat p_{i,r}$ to the target accuracy $\eps_r$, so no arm is eliminated before its mean has been estimated reliably. The upper guarantee $U_{i,r-1}\le(1+\kappa)v_i$ delivers efficiency: the batch size exceeds the oracle allocation $(1-\rho_i^2)\eps_r^{-2}$ by at most a factor $1+\kappa$, so the total sample complexity stays controlled. Using a plug-in estimate $\hat v_i$ in place of $U_{i,r}$ would forfeit the lower guarantee, and hence correctness, for the reasons detailed in Remark~\ref{remark:compare-v-rho-estimation}.

\noindent\textbf{Monotone certificate updates.} The sandwich $v_i\leq U_{i,r-1}\leq(1+\kappa)v_i$ on residual variance is established by induction from the calibration stage. Drawing $t_{\rm cal}=2+\lceil64\log(8K/\delta)/\kappa^2\rceil$ samples per arm gives $v_i\le U_{i,0}\le(1+\kappa)v_i$ at $r=0$ for all arms with high probability, using the $\chi^2$ concentration of $\hat v_i/v_i$ in Proposition~\ref{prop:ols-concentration}. Each subsequent batch produces a fresh residual-variance upper certificate $\widetilde U_{i,r}$, and the monotone update $U_{i,r}=\min\{U_{i,r-1},\widetilde U_{i,r}\}$ preserves both sides of the sandwich: taking the minimum of valid upper bounds keeps the lower guarantee $v_i\le U_{i,r}$, while never worsening the initial factor-$(1+\kappa)$ tightness. Hence $v_i\le U_{i,r}\le(1+\kappa)v_i$ holds throughout the algorithm.

\noindent\textbf{One-round lag between mean and variance certificate.} \texttt{PROBE} sizes the round-$r$ batch through $U_{i,r-1}$, computed from data collected before round $r$, while it estimates $\hat p_{i,r}$ only from the fresh round-$r$ observations. This deliberate lag decouples estimation on mean from residual-variance upper certificate: since the accuracy of $\hat p_{i,r}$ is certified by the fixed $U_{i,r-1}\ge v_i$ from earlier rounds, Proposition~\ref{prop:ols-concentration} applies directly conditional on the fresh proxy values, and concentration of $\hat p_{i,r}$ remains exactly Gaussian. Reusing a single batch for both purposes would instead couple them and complicate the concentration of $\hat p_{i,r}$ to follow a Student-$t$ law rather than a Gaussian one, as in \citet{JiEtAl2025}. Instead, the lag yields a sharper concentration without changing the leading sample complexity as the geometrically growing phase lengths $\eps_r^{-2}$ absorb the one-round delay.
\end{remark}

\section{Sample Complexity}\label{sec:sample-complexity}

In this section, we show that \texttt{PROBE} matches the gap-dependent sample complexity reduced by the residual variance $1-\rho_i^2$ up to the multiplicative factor $1+\kappa$ and an initial calibration cost.

\subsection{$\delta$-PAC Analysis}
\begin{theorem}\label{thm:probe}
Under the proxy observation model~\eqref{eq:proxy-model}, \texttt{PROBE} is $\delta$-PAC. Moreover, with probability at least $1-\delta$, its total number of arm pulls is at most
\begin{equation}
\begin{aligned}
T\leq K\left(2+\left\lceil\frac{64\log(8K/\delta)}{\kappa^2}\right\rceil\right) +
\sum_{i=1}^K \Bigg(&\frac{1536(1+\kappa)(1-\rho_i^2)}{\Delta_i^2} \log\left(\frac{12K}{\delta}\log_2\left(\frac{3}{2\Delta_i}\right)\right)\\
&+\log\left(\frac{3}{2\Delta_i}\right) \left(64\log\left(\frac{12K}{\delta}\log_2\left(\frac{3}{2\Delta_i}\right)\right) +6\right)\Bigg)
\end{aligned}
\label{eq:probe-sample-complexity}
\end{equation}
\end{theorem}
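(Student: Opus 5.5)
The proof works on a single good event $\mathcal{G}$ with $\Pr(\mathcal{G})\ge 1-\delta$. On $\mathcal{G}$ we show that the best arm is never eliminated, and that each suboptimal arm $i$ is gone by the first round $r_i$ with $\eps_{r_i}<\Delta_i/3$ or so, i.e.\ $r_i=\lceil\log_2(3/(2\Delta_i))\rceil$ up to the exact constant. We then add up the batch sizes.

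\textbf{The good event.} It is the intersection of three families of events.
\begin{enumerate}
\item[(a)] \emph{Calibration sandwich.} For every arm $i$, $v_i\le U_{i,0}\le(1+\kappa)v_i$.
\item[(b)] \emph{Valid fresh certificates.} For every arm $i$ and round $r$, $v_i\le \widetilde U_{i,r}$.
\item[(c)] \emph{Mean concentration.} For every arm $i$ and round $r$, $|\hat p_{i,r}-p_i|\le \eps_r/4$.
\end{enumerate}

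For (a), apply Proposition~\ref{prop:ols-concentration}: $(t_{\rm cal}-2)\hat v_{i,0}/v_i\sim\chi^2_{t_{\rm cal}-2}$. Combine this with Laurent--Massart tails for both sides. The lower tail $\chi^2_n\ge n-2\sqrt{n\log(1/\eta)}$ gives $v_i\le U$. The upper tail bounds $\hat v_{i,0}\le v_i(1+2\sqrt{x/n}+2x/n)$. With $n\ge 64\log(8K/\delta)/\kappa^2$, both $2\sqrt{x/n}$ and $2x/n$ are $O(\kappa/4)$. This makes the ratio $(1+2\sqrt{x/n}+2x/n)/(1-2\sqrt{x/n})\le 1+\kappa$ for $\kappa\le1$; I expect to assume $\kappa\le 1$ or handle $\kappa>1$ separately. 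The union bound costs $2K\cdot\delta/(8K)$.

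For (b), the lower tail at level $\delta_r$ gives the bound. Summing over $i$ and $r$ costs $\sum_r K\delta/(16Kr^2)\le \delta\pi^2/96$.

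On (a)$\cap$(b), induction on $r$ together with the monotone update $U_{i,r}=\min\{U_{i,r-1},\widetilde U_{i,r}\}$ yields $v_i\le U_{i,r}\le(1+\kappa)v_i$ for all $r$.

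For (c), exploit the one-round lag. Condition on $\mathcal{F}_{r-1}$ (which fixes $t_{i,r}$ and $U_{i,r-1}$) and on the fresh $W$'s. Proposition~\ref{prop:ols-concentration} then gives $\hat p_{i,r}-p_i\sim\mathcal N(0,v_ih_{i,r})$. The subtle point is that $h_{i,r}=1/t+\bar W^2/\sum(W-\bar W)^2$ is random. I would bound it by $2/t$ with probability $1-\delta_r/2$, using $\bar W^2\le 2\log(4/\delta_r)/t$ and $\sum(W-\bar W)^2\ge t/2$. The latter is a $\chi^2_{t-1}$ lower tail, which is where the conditions $t\ge 1+16\log(2/\delta_r)$ and $t\ge 8\log(4/\delta_r)$ in the batch size come from. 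On the sandwich event, $v_ih\le 2U_{i,r-1}/t\le \eps_r^2/(32\log(2/\delta_r))$. The Gaussian tail then gives $|\hat p_{i,r}-p_i|\le\eps_r/4$ with probability $1-\delta_r/2$. Note that (c) is only needed on the event $U_{i,r-1}\ge v_i$, so the argument bounds $\Pr(\text{(c) fails},\ U_{i,r-1}\ge v_i)$, which avoids circularity. The remaining failure probabilities sum to at most $\delta$, and the constants $8K$ and $16Kr^2$ appear to be chosen exactly for this.

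\textbf{Correctness and elimination.} On $\mathcal{G}$, $\hat p_{i^*,r}\ge \hat p_{i_r,r}-\eps_r/2$, so $i^*$ survives every round. An arm with $\Delta_i>3\eps_r/2$ satisfies $\hat p_{i,r}<\hat p_{i^*,r}-\eps_r\le \hat p_{i_r,r}-\eps_r$, so it is removed in round $r$. Hence arm $i$ participates in at most $R_i=\lceil\log_2(3/(2\Delta_i))\rceil$ rounds; for $i^*$ the same bound holds via $\Delta_{i^*}$. Correctness follows because the loop terminates with $S=\{i^*\}$.

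\textbf{Summing samples.} Bound each $t_{i,r}$ by the sum of its arguments plus 1:
\[
t_{i,r}\le 64(1+\kappa)v_i\log(2/\delta_r)4^r+64\log(2/\delta_r)+6 .
\]
Here $\log(2/\delta_r)=\log(32Kr^2/\delta)\le\log\bigl((12K/\delta)\log_2(3/(2\Delta_i))\bigr)$ up to constants for $r\le R_i$, which needs a little care in the form $32r^2\lesssim(12\log_2)^2$. Then sum the geometric series $\sum_{r\le R_i}4^r\le \tfrac43 4^{R_i}\le \tfrac43\cdot 4\cdot(3/(2\Delta_i))^2=12/\Delta_i^2$. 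The factor $1536$ absorbs $64\cdot 12\cdot 2$, where the 2 comes from the squared log inside. Adding $R_i$ times the non-gap terms and the calibration cost $Kt_{\rm cal}$ gives~\eqref{eq:probe-sample-complexity}.

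The main obstacle is step (a)'s two-sided tightness with the specific constant 64, together with controlling the random leverage $h_{i,r}$ in (c). The rest is bookkeeping.
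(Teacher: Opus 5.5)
Your proposal is correct and follows essentially the same route as the paper. It uses the same clean event: the two-sided Laurent--Massart calibration sandwich, fresh $\chi^2$ lower-tail certificates with the monotone-minimum induction, the leverage bound $h_{i,r}\le 2/t_{i,r}$, and exact Gaussian intercept concentration via the one-round lag. The elimination argument at $\eps_r<2\Delta_i/3$ and the geometric summation are also the same, and the paper likewise tacitly uses $\kappa\le 1$ (via $y\le\kappa/8\le 1/8$).

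There are two harmless bookkeeping slips. First, with the algorithm's thresholds the leverage event and the Gaussian tail each fail with probability up to $\delta_r$, not $\delta_r/2$; the total $\delta/4+3\sum_{r\ge1}\delta/(16r^2)$ is still below $\delta$. Second, to recover the stated lower-order constant $64\log(\cdot)+6$ you need the bound $\max\{3,\lceil 1+16\log(2/\delta_r)\rceil,\lceil 8\log(4/\delta_r)\rceil\}\le 2+16\log(2/\delta_r)$. This gives $t_{i,r}\le 64U_{i,r-1}\log(2/\delta_r)\eps_r^{-2}+16\log(2/\delta_r)+3$, in place of your looser additive term $64\log(2/\delta_r)+6$.
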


\begin{proof}[Proof Sketch]

The proof works on a clean event under which two properties hold uniformly over arms and rounds: every residual-variance upper certificate is valid and tight enough as
\[
v_i\le U_{i,r}\le (1+\kappa)v_i,
\]
and every OLS intercept used for elimination is accurate as
\[
|\hat p_{i,r}-p_i|\le \eps_r/4.
\]
The clean event follows from three parts. First, the calibration event gives $v_i\le U_{i,0}\le (1+\kappa)v_i$, then the monotone updates and the $\chi^2$-distribution gives $v_i\le U_{i,r}\le (1+\kappa)v_i$ for all round $r\geq0$ with induction. Second, the OLS intercept estimation $\hat{p}_{i,r}$ is concentrated at $p_i$. Third, the leverage event $h_{i,r}\le 2/t_{i,r}$ happens (established by the first three terms in $t_{i,r}$), which further (by the final term in $t_{i,r}$) implies $|\hat p_{i,r}-p_i|\le \eps_r/4$ with the OLS intercept estimation.

Conditioned on this clean event which happens with probability at least $1-\delta$, correctness follows from the standard phase-elimination argument. The best arm is never removed: if $\istar\in S_r$, then the empirical reference arm $i_r$ satisfies
\[
\hat p_{i_r,r}\le p_{\istar}+\eps_r/4,
\qquad
\hat p_{\istar,r}\ge p_{\istar}-\eps_r/4,
\]
so $\hat p_{i_r,r}-\hat p_{\istar,r}\le \eps_r/2<\eps_r$. Conversely, a suboptimal arm $i$ is eliminated once $\eps_r<2\Delta_i/3$ because
\[
\hat p_{\istar,r}-\hat p_{i,r} \ge \Delta_i-\eps_r/2 > \eps_r.
\]
Since the empirical reference arm has value at least $\hat p_{\istar,r}$, arm $i$ is removed if it is still active.

It remains to count samples. A suboptimal arm $i$ is sampled only until the first round in which $\eps_r<2\Delta_i/3$, namely $r\geq O(\log_2(1/\Delta_i))$, and the best arm is sampled only until the analogous round with $\Delta_{\istar}=\min_{i\ne\istar}\Delta_i$. Conditioned on the clean event, arm $i$ is sampled for
\[
t_{\rm cal} + \sum_{r=1}^{\log_2(1/\Delta_i)}t_{i,r} \leq \tilde{O}\left( \frac{1}{\kappa^2} + \sum_{r=1}^{\log_2(1/\Delta_i)} \frac{(1+\kappa)(1-\rho_i^2)}{4^{-r}} \right) \leq \tilde{O}\left( \frac{1}{\kappa^2} + \frac{(1+\kappa)(1-\rho_i^2)}{\Delta_i^2} \right)
\]
Summing this display for every arm $i$ yields the bound in \eqref{eq:probe-sample-complexity}. The full proof is deferred to Section~\ref{sec:proof-of-thm-probe} in the appendix.
\end{proof}

Note that, Algorithm~\ref{alg:probe} can be easily extended to the $(\epsilon,\delta)$-PAC setting. In the previous $\delta$-PAC setting, we focus on exact best-arm identification, where the algorithm must return $\istar$ with probability at least $1-\delta$. In many applications, it is sufficient to identify an arm whose mean reward is within a prescribed accuracy level $\epsilon>0$ of the best arm. Formally, an algorithm is called $(\epsilon,\delta)$-PAC if its recommendation $\hat i$ satisfies
\[
\Pr(p_{\hat i}\ge p_{\istar}-\epsilon) \geq 1-\delta.
\]
To accommodate this relaxation, the modification of \texttt{PROBE} is minimal. The only change is the stopping rule: the algorithm is terminated at round $\min\{r:\eps_r<{2\epsilon}/{3}\}$ even if $|S_r|>1$ and an arbitrary arm in $S_r$ is returned.

The sample complexity is obtained from Theorem~\ref{thm:probe} by replacing each gap $\Delta_i$ with the effective gap $\max\{\Delta_i,\epsilon\}$. In particular, with probability at least $1-\delta$, the total number of arm pulls is at most
\begin{equation}
\begin{aligned}
T_{\epsilon}\leq K & \left(2+  \left\lceil\frac{64\log(8K/\delta)}{\kappa^2}\right\rceil\right) + \sum_{i=1}^K \Bigg( \frac{1536(1+\kappa)(1-\rho_i^2)}{\max\{\Delta_i,\epsilon\}^2} \log\left( \frac{12K}{\delta} \log_2\left(\frac{3}{2\max\{\Delta_i,\epsilon\}}\right) \right)\\
&+ \log_2\left(\frac{3}{2\max\{\Delta_i,\epsilon\}}\right) \left( 64\log\left( \frac{12K}{\delta} \log_2\left(\frac{3}{2\max\{\Delta_i,\epsilon\}}\right) \right)+6 \right) \Bigg).
\end{aligned}
\label{eq:probe-ep-delta}
\end{equation}
Ignoring constants and lower-order logarithmic terms, \eqref{eq:probe-ep-delta} gives
\[
\widetilde O\left(
\frac{K}{\kappa^2}
+
\sum_{i=1}^K
\frac{(1+\kappa)(1-\rho_i^2)}{\max\{\Delta_i,\epsilon\}^2}
\right),
\]
preserving the same variance-adaptive improvement as the exact-identification guarantee.

\subsection{Comparison with Benchmarks}

We then compare the sample complexity of \texttt{PROBE} in Theorem~\ref{thm:probe} with two benchmarks given in \eqref{eq:PE-bound-base} and \eqref{eq:PE-bound-known}. The sample complexity in Theorem~\ref{thm:probe} can be expressed in leading dependence
\begin{equation}
\sum_{i=1}^K \left( \frac{(1+\kappa)(1-\rho_i^2)}{\Delta_i^2}\log\left(\frac{K}{\delta}\log\left( \frac{1}{\Delta_i}\right) \right) + \frac{\log(K/\delta)}{\kappa^2} \right)
\end{equation}
where the first term corresponds to the phase elimination stage and the second terms corresponds to the calibration stage. Recall that the sample complexity of the fixed-confidence best-arm identification without proxy has leading dependence
\[
\sum_{i=1}^K
\frac{1}{\Delta_i^2}
\log\left(\frac{1}{\delta}\log\left(\frac{1}{\Delta_i}\right)\right).
\]
and the one with proxy and known correlation $\rho_i$ has leading dependence
\[
\sum_{i=1}^K
\frac{1-\rho_i^2}{\Delta_i^2}
\log\left(\frac{1}{\delta}\log\left(\frac{1}{\Delta_i}\right)\right),
\]
\texttt{PROBE} pays only a multiplicative factor $1+\kappa$ in the main gap-dependent term and an additive calibration cost of order $\tilde{O}(\kappa^{-2})$.

The parameter $\kappa$ captures the exploration and exploitation tradeoff facing unknown correlation: a tradeoff between the accuracy in residual variance $v_i$ estimation (exploitation) and the sample size in calibration for this accuracy (exploration). On one hand, $\kappa$ quantifies the relative accuracy with which the algorithm learns an upper certificate $U_{i,r}$ for $v_i$. To be specific, \texttt{PROBE} maintains $U_{i,r}\le (1+\kappa)v_i$ throughout the algorithm and provides tolerance towards overestimating $U_{i,r}$. This tolerance in upper bound further ensures that the batch size is not much larger than the oracle batch size based on the unknown residual variance $v_i=1-\rho_i^2$ by a factor of $1+\kappa$ during elimination. On the other hand, a lower tolerance in estimation accuracy requires more samples by an additive factor of $\kappa^{-2}$ to establish the certificate reliably during calibration. In a word, $\kappa$ represents the price of not knowing $\rho_i$ and quantifies the tradeoff between exploring and exploiting this correlation.

Thanks to the $\chi^2$-distribution of the residual variance, the additive exploration term is independent of the gaps $\Delta_i$ and the correlations $\rho_i$. We can fix it as a small constant value then the instance enjoys the same leading dependence as the known-$\rho_i$ oracle $\widetilde O\left( {(1-\rho_i^2)}/{\Delta_i^2}\right)$ for each arm $i$, up to a constant multiplicative factor.

\section{Numerical Experiments on Synthetic Data}\label{sec:experiments}
We use synthetic experiments to isolate the statistical mechanism behind \texttt{PROBE}: a proxy is useful when its centered within-arm variation explains reward noise, even if its mean is biased or rank-reversed. We study two settings. The first is a paired Gaussian benchmark in which the proxy is prescribed by hand and its correlation with the reward is a controlled parameter. The second is a nonlinear environment in which the proxy is a learned predictor, so that its correlation is an outcome of model fit rather than a chosen quantity. Throughout, the performance metric is the fixed-confidence stopping time, measured by the total number of online reward pulls before stopping.

\subsection{Gaussian Instance}\label{ssec:gaussian-benchmark}

\noindent\textbf{Setup:}
We consider a synthetic Gaussian instance with \(K=8\) arms. The reward means are $(1.00,0.90,0.84,0.76,0.70,0.62,0.54,0.44)$, and the proxy means are $(0.00,0.25,0.50,0.75,1.00,1.25,1.50,1.75)$, following model \eqref{eq:proxy-model}. Thus, the proxy means are deliberately ordered in the opposite direction from the reward means, so that the proxy cannot be useful through mean ranking alone. For each arm \(i\), we generate paired Gaussian observations with reward variance one, proxy variance one, and common reward-proxy correlation \(\rho\). We vary $\rho \in \{0,0.2,0.4,0.6,0.8,0.9\}$.

We compare three classes of methods. The first is \texttt{PROBE} without proxy, which ignores the proxy and uses reward observations only and the calibration step in Algorithm~\ref{alg:probe} is skipped. The second is a known-correlation oracle, which residualizes the reward using the true value of \(\rho\) and runs the same phase-elimination procedure with residual variance \(1-\rho^2\). The third is \texttt{PROBE} with unknown correlation, and we test among $\kappa \in \{0.1,0.2,0.5,1.0\}$. We set \(\delta=0.05\), repeat each configuration 1000 times, and report the average stopping time and its ratio relative to \texttt{PROBE} witout proxy at the same value of \(\rho\).

\noindent\textbf{Results:}
The results are shown in Figure~\ref{fig:gaussian-benchmark}. When \(\rho=0\), the proxy carries no information about the reward noise. In this case, proxy-assisted \texttt{PROBE} pays a learning overhead. This is consistent with the role of calibration, and when there is no variance reduction to learn, the proxy can only add estimation cost leading to a sample complexity even higher than not using proxy. As \(\rho\) increases, the proxy becomes more useful, the stopping time decreases, and exceed the no proxy baseline after $\rho\geq0.4$. Note that, although for all $\rho$ the sample complexity of \texttt{PROBE} is higher than the known-correlation baseline, due to the multiplicative factor $1+\kappa$, \texttt{PROBE} is very close to the known-proxy baseline and the $1-\rho^2$ theoretical reference.

These results confirm that the improvement comes from same-unit reward-proxy covariance rather than from the proxy mean itself. Although the proxy means are non-rank-preserving, \texttt{PROBE} substantially reduces the fixed-confidence stopping time once the proxy explains a nontrivial portion of the reward noise. The empirical error rates remain close to the nominal level, with the largest observed error rate below \(5\%\) across all configurations.

\begin{figure}[!ht]
    \centering
    \includegraphics[width=\linewidth]{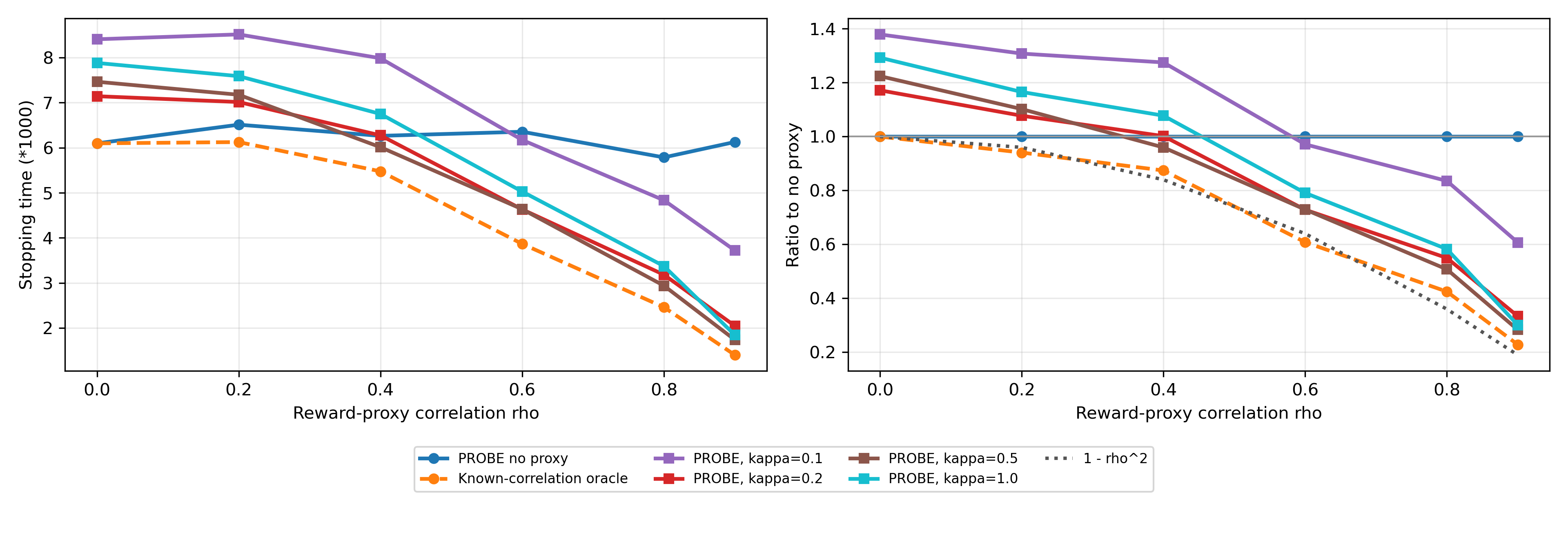}
    \caption{Sample Complexity under Gaussian Instance. The left panel plots stopping time measured in thousands, and the right panel plots stopping time normalized by reward-only \texttt{PROBE}.}
    \label{fig:gaussian-benchmark}
\end{figure}

\subsection{Nonlinear Instance}\label{ssec:ml-proxy}

\textbf{Setup:}
We consider \(K=5\) arms, where arm \(i\) corresponds to an action \(a_i\) on an evenly spaced grid, \(a_i\in\{-1,-0.5,0,0.5,1\}\). Each sampled unit carries a latent feature vector \(u=(u_1,\dots,u_5)\) with independent standard normal entries. Given action \(a\) and features \(u\), the reward is driven by the nonlinear signal
\begin{equation*}
g(u,a)=0.9\sin(u_1+a)+0.55\,a\,u_2+0.35\,(u_3^2-1)(a+0.4)+0.4\cos(u_4-0.5a)+0.2\,u_5,
\label{eq:ml-signal}
\end{equation*}
which mixes sinusoidal, quadratic, and action-feature interaction terms. The realized reward of arm \(i\) on a unit is
\begin{equation*}
X_i = p_i + \frac{g(u,a_i)+0.45\,\xi-\mu_{a_i}}{\sigma_{a_i}},\qquad \xi\sim\mathcal N(0,1),
\label{eq:ml-reward}
\end{equation*}
where \(\xi\) is independent observation noise, and \((\mu_{a_i},\sigma_{a_i})\) are the mean and standard deviation of \(g(u,a_i)+0.45\xi\), estimated once from a large pretabulated sample. Subtracting \(\mu_{a_i}\) and dividing by \(\sigma_{a_i}\) standardizes each arm to unit reward variance and mean \(p_i\), matching the unit-variance model of the analysis. The arm means \(p=(1.00,0.90,0.84,0.76,0.70)\) decrease across arms, so arm~1 is the best. As in the Gaussian benchmark, the proxy means \(m=(0.00,0.30,0.60,0.90,1.20)\) increase across arms and are therefore non-rank-preserving.

The proxy is a regression model fitted to predict the reward from the feature map $\phi(u,a)=(u,a,au_1,au_2,a^2)$. This map deliberately omits the terms through which \(g\) actually depends on the features, namely the sinusoid \(\sin(u_1+a)\), the quadratic \(u_3^2\), the cosine \(\cos(u_4-0.5a)\), and the linear term \(u_5\). No model built on \(\phi\) can therefore reproduce the reward exactly, and the achievable reward-proxy correlation is limited by how much of \(g\) each model recovers from \(\phi\). We fit three standard models of increasing flexibility on an offline training set of \(12{,}000\) units: linear regression, a depth-limited decision tree, and gradient boosting. Each fitted model then scores a separate pool of \(40{,}000\) units per arm, and each score is paired with the realized reward on the same unit. We use \(\delta=0.05\) and \(\kappa=1.0\), repeat each configuration 120 times, and report the stopping-time ratio relative to \texttt{PROBE} without proxy, together with the mean and minimum armwise correlations of each proxy.

\noindent\textbf{Results:}
The three models differ only in how much of the nonlinear structure omitted from the feature map $\phi(u,a)$ they can recover, and Figure~\ref{fig:ml-proxy-sim} shows that this is exactly the order in which they shorten the stopping time. Gradient boosting, the most flexible model, recovers the most of the omitted structure in \eqref{eq:ml-signal} and is the most strongly aligned with the reward, so it attains the largest sample saving. The decision tree and linear regression recover less and are correspondingly less aligned, so their savings are smaller and close to each other. All three proxies still identify the best arm in every repetition. The ordering matches the Gaussian benchmark and the theory: a proxy that explains more within-arm reward variation leaves a lower OLS residual variance and a shorter fixed-confidence stopping time, regardless of how the proxy is produced or whether its mean is rank-preserving.

\begin{figure}[!ht]
    \centering
    \includegraphics[width=0.49\linewidth]{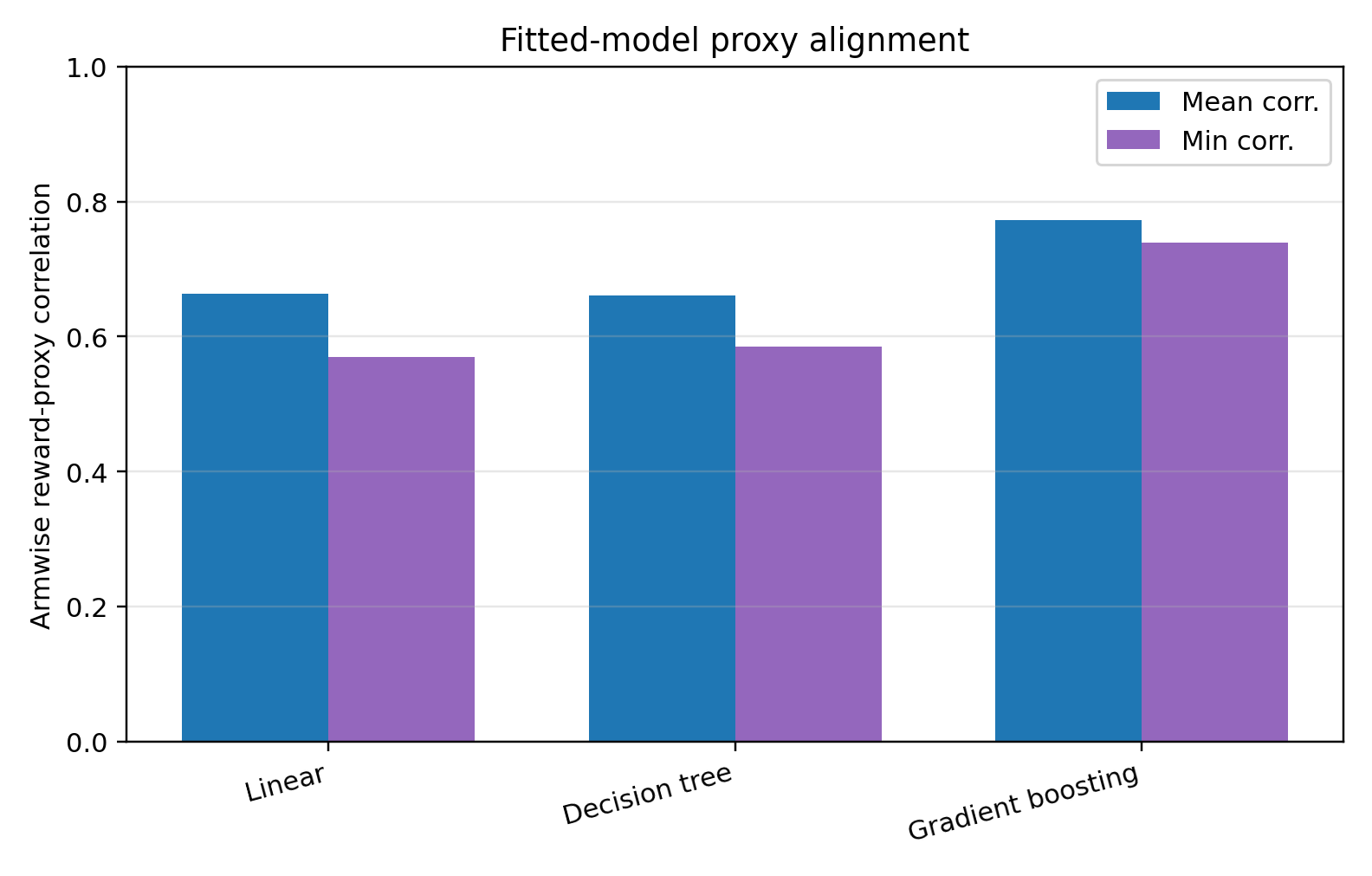}
    \includegraphics[width=0.49\linewidth]{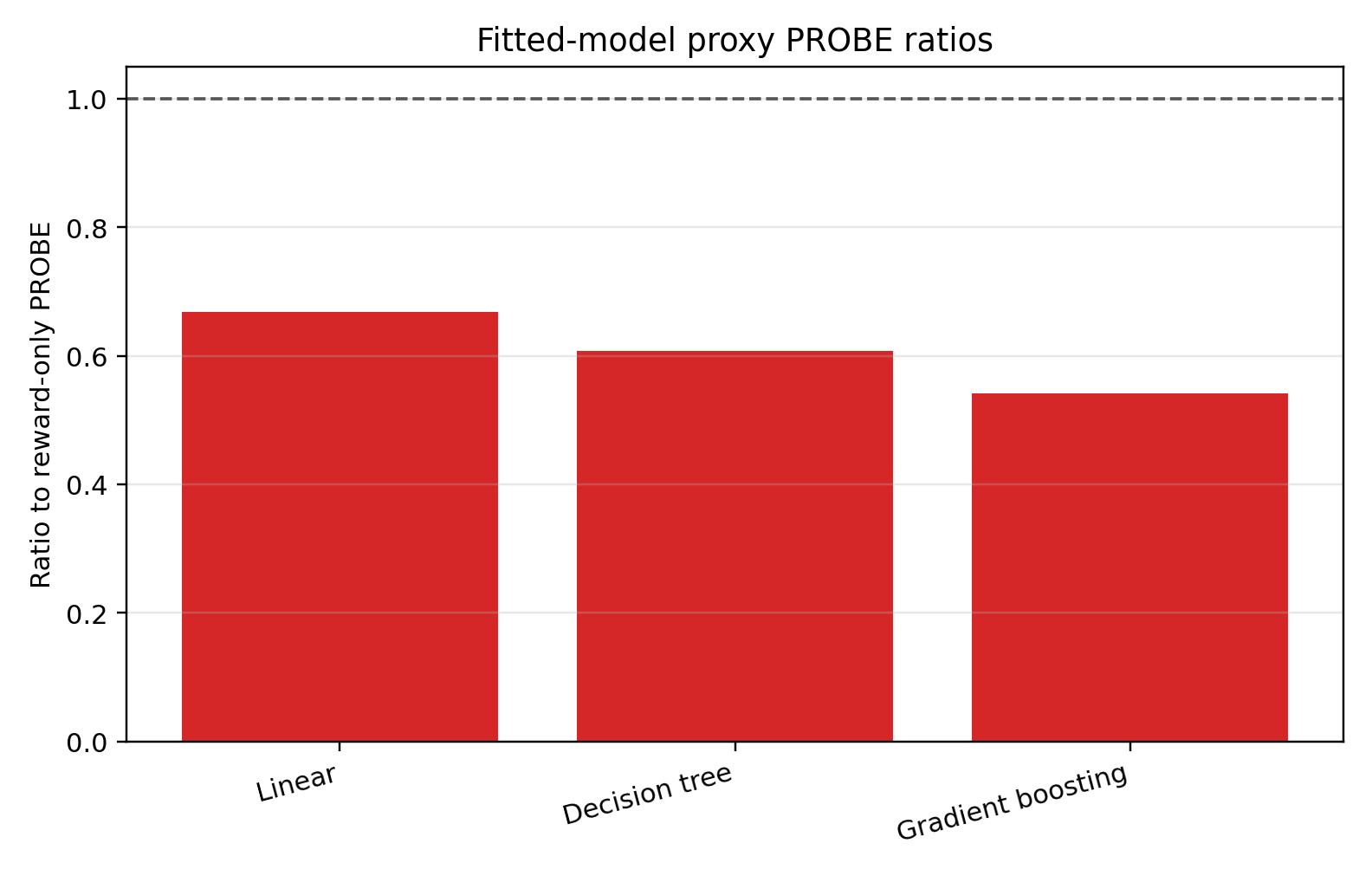}
    \caption{Nonlinear synthetic instance. The left panel reports the mean and minimum armwise reward-proxy correlations. The right panel reports the corresponding stopping-time ratio of \texttt{PROBE} relative to reward-only \texttt{PROBE}.}
    \label{fig:ml-proxy-sim}
\end{figure}

\section{Numerical Experiments on Real Data: Auto-loan Replay Experiment}
\label{sec:autoloan-replay}

We now evaluate \texttt{PROBE} on a pricing policy-selection task built from real auto-loan data. This is the setting of the pricing example in Section~\ref{sec:intro}: a lender must decide which of two pricing rules (two arms) earns more revenue, each online sample is a costly interaction with a real applicant, and a machine-learning model supplies a cheap proxy for the outcome. The experiment tests whether the variance reduction isolated in the synthetic study still materializes when the proxy is an LLM or a tabular predictor, and when the reward is a realized acceptance rather than Gaussian noise.

\subsection{Data and Setup}
We use the ``On-Line Auto Lending'' data set from the Center for Pricing and Revenue Management at Columbia University \citep{ColumbiaCPRM2012}, first studied by \citet{PhillipsSimsekVanRyzin2015}. Each record is an auto-loan application together with the contract offered to the applicant, described by customer and loan features (\eg, FICO score, approved amount, term, and competitor rate), the offered monthly-payment margin price, and a binary label indicating whether the applicant accepted the offer. We sort the applications chronologically and split them into a \emph{historical block} of the first $50{,}000$ records and a \emph{replay pool} of the next $40{,}000$ records. The historical block is used only to build the environment and to train the proxies, and the replay pool supplies the applicant contexts encountered online.

We fit an acceptance model $q(C,P)$ for each context $C$ and provided price $P$ on the historical block and treat it as the ground-truth demand environment. At each arm pull on the pricing rule, a fresh applicant context $C$ is drawn from the replay pool, and the pricing rule $\pi$ computes the price $P=\pi(C)$ to offer, which together generates a Bernoulli acceptance from $q$ and realized to 0 or 1. The realized reward is the margin revenue, equal to the offered price when the applicant accepts and zero otherwise, with mean reward
$$
m(\pi) = \E_{C\sim D_{\rm context}}[\pi(C)\cdot q(C,\pi(C))]
$$ 
as an expectation over context (applicant) distribution. Rewards are scaled to unit variance to match the normalization of the analysis. We emphasize that $q$ defines the environment only and is never revealed to the proxies.

We consider a best-arm identification on the two arms which are both fixed pricing rules. The historical rule $\pi_{\mathrm{hist}}$ re-offers the price actually observed for each applicant. The improved rule $\pi_{\mathrm{imp}}$ instead offers, for each applicant, the revenue-maximizing price among $\{0.8,0.9,1.0,1.1,1.2\}$ times the historical price within the empirical price support, where the maximization uses the fitted environment. Under the fitted environment, the historical rule earns expected margin revenue $133.0$ and the improved rule earns $177.7$, so the identification target is the improved rule. The fixed-confidence task is to recover this pricing rule (arm) with probability at least $1-\delta$.

Each proxy predicts an applicant's acceptance probability from the context and offered price, which we convert to a proxy revenue score by multiplying by the offered price. This score is paired with the realized reward on the same applicant-rule pull. Intuitively, the proxy helps whenever it tracks which applicants are unusually likely or unlikely to accept, so that residualizing the reward on the centered proxy removes part of the predictable, context-driven variation in revenue and leaves a smaller residual for the mean estimate. What remains after residualization, the idiosyncratic acceptance shock, cannot be predicted from the context and sets a floor on the achievable variance reduction.

We compare five variants of \texttt{PROBE}. The baseline is reward-only \texttt{PROBE}, which ignores the proxy. Three variants use LLM proxies: Qwen2.5-7B \citep{qwen2025qwen25technicalreport} with in-context learning run locally, GPT-5.5 \citep{openai2026gpt55} with in-context learning through its API, and Qwen2.5-7B fine-tuned on the historical block with LoRA, whose raw acceptance-classification score is used as the proxy. The last variant uses TabPFN \citep{hollmann2023tabpfntransformersolvessmall}, a pretrained tabular predictor, as a strong structured (non-LLM) proxy that calibrates how \texttt{PROBE} behaves when the proxy is highly accurate.

We run every variant with \(\delta=0.05\) and \(\kappa=1.0\), and average over $3{,}000$ independent replay repetitions. Besides the mean stopping time, we report its median and 90th percentile, the empirical correctness at stopping, the sample saving relative to reward-only \texttt{PROBE}, and the absolute reward-proxy correlation on each arm. We report absolute correlations because the sign of the association is absorbed by the OLS slope, so only its strength governs the residual-variance reduction.

\begin{table}[!ht]
\centering
\footnotesize
\setlength{\tabcolsep}{4pt}
\renewcommand{\arraystretch}{1.08}
\begin{tabular*}{\linewidth}{@{\extracolsep{\fill}}lrrrrrrr@{}}
\toprule
Method
& Mean
& Median
& 90\%
& Correct
& \(|\rho_{\mathrm{hist}}|\)
& \(|\rho_{\mathrm{imp}}|\)
& Saving \\
\midrule
Reward-only \texttt{PROBE}
& 105.2
& 105.2
& 106.7
& 1.000
& --
& --
& 0.0\% \\
Qwen2.5-7B local ICL
& 101.7
& 101.8
& 103.5
& 1.000
& 0.275
& 0.011
& 3.3\% \\
GPT-5.5 API ICL
& 95.6
& 95.7
& 97.6
& 1.000
& 0.264
& 0.327
& 9.1\% \\
Qwen2.5-7B LoRA raw CLS
& 52.0
& 52.3
& 54.5
& 1.000
& 0.725
& 0.707
& 50.5\% \\
TabPFN structured proxy
& 40.6
& 40.8
& 43.1
& 1.000
& 0.822
& 0.760
& 61.4\% \\
\bottomrule
\end{tabular*}
\caption{Stopping times (in thousands) of auto-loan replay across different proxies. The columns \(|\rho_{\mathrm{hist}}|\) and \(|\rho_{\mathrm{imp}}|\) report absolute reward-proxy correlations. Savings are relative to reward-only \texttt{PROBE}.}
\label{tab:auto-loan-results}
\end{table}

\begin{figure}[!ht]
    \centering
    \includegraphics[width=0.82\linewidth]{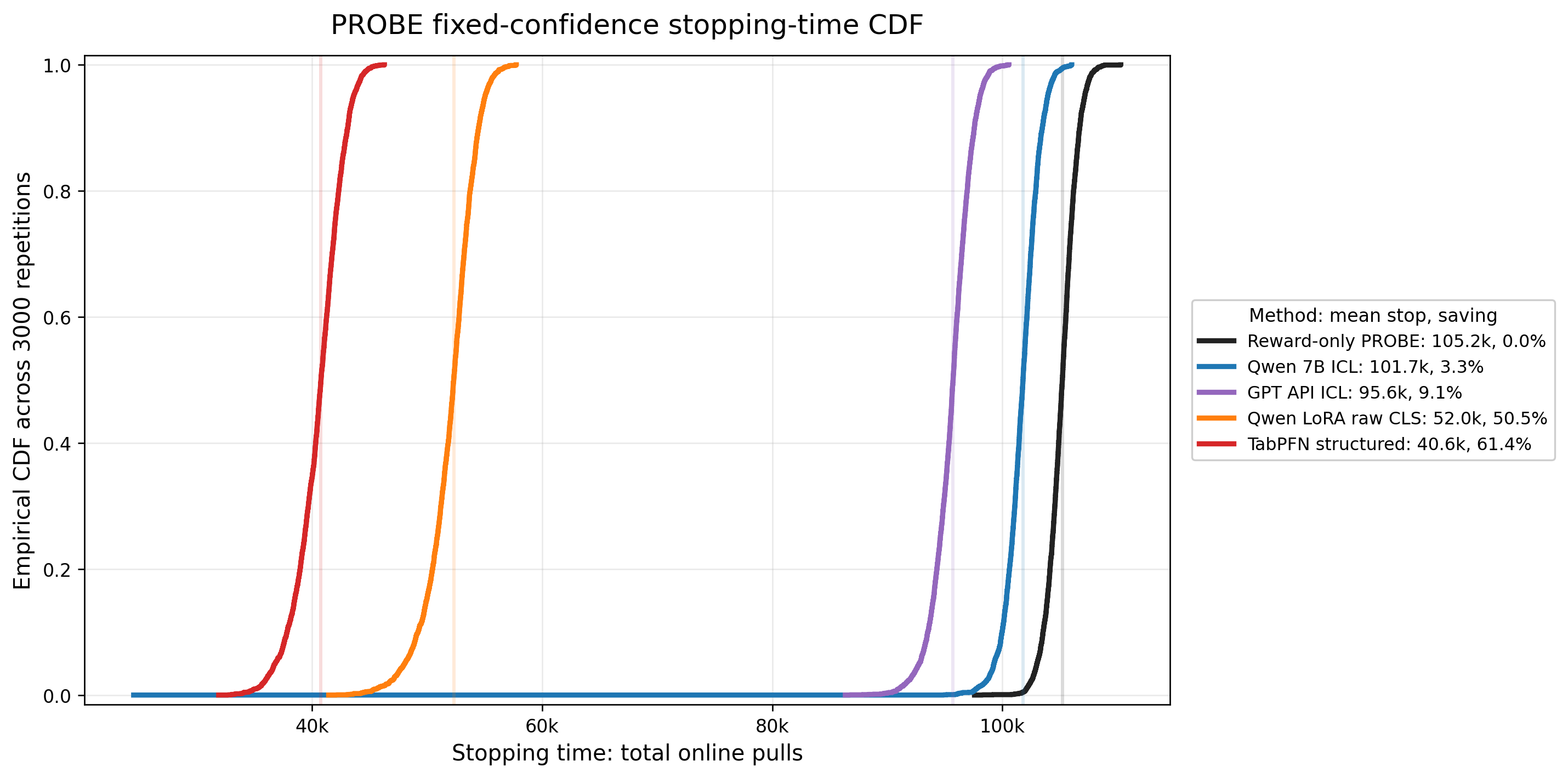}
    \caption{Empirical CDFs of fixed-confidence stopping times in the auto-loan replay experiment. Leftward shifts indicate fewer online reward pulls before \texttt{PROBE} stops at the target confidence level.}
    \label{fig:auto-loan-cdf}
\end{figure}

\subsection{Results}
Table~\ref{tab:auto-loan-results} and Figure~\ref{fig:auto-loan-cdf} report the results. Every variant identifies the improved rule in all $3{,}000$ repetitions, so all methods are exactly correct and none exhausts the sampling budget; the variants differ only in how quickly they stop. Reward-only \texttt{PROBE} stops after $105.2$k pulls on average. The two in-context-learning proxies help modestly, cutting the stopping time to $101.7$k pulls (Qwen2.5-7B, $3.3\%$ saving) and $95.6$k pulls (GPT-5.5, $9.1\%$). The fine-tuned and structured proxies help far more: Qwen2.5-7B LoRA reaches $52.0$k pulls ($50.5\%$ saving) and TabPFN reaches $40.6$k pulls ($61.4\%$). Figure~\ref{fig:auto-loan-cdf} shows the same ordering as a uniform leftward shift of the entire stopping-time distribution, not merely of its mean.

The savings track the strength of the reward-proxy correlation, exactly as the theory predicts. The in-context proxies are weakly and unevenly aligned with the reward: the local Qwen proxy correlates at $0.275$ on the historical arm but essentially zero ($0.011$) on the improved arm, so it can residualize only one of the two policies and saves little. The fine-tuned and structured proxies are strongly aligned on both arms ($0.725$ and $0.707$ for LoRA; $0.822$ and $0.760$ for TabPFN), so \texttt{PROBE} removes a large predictable component of revenue from each arm and needs far fewer pulls to separate the two policies. 

\section{Conclusion}\label{sec:conclusion}
In this paper, we study fixed-confidence best-arm identification when each costly reward pull is paired with a cheap proxy score whose marginal mean is known but whose correlation with the reward is not. We propose \texttt{PROBE}, a phase-elimination algorithm that uses the proxy as a control variate and maintains a residual-variance upper certificate, from an ordinary least squares fit, that stays valid regardless of the unknown correlation. We prove that \texttt{PROBE} is $\delta$-PAC and attains the known-correlation oracle sample complexity up to a factor $1+\kappa$ and an additive calibration cost of order $\widetilde O(K/\kappa^2)$, and that it extends to the $(\epsilon,\delta)$-PAC setting. Experiments on synthetic instances and on an auto-loan pricing replay with large language model and tabular proxies confirm that its sample savings scale with the strength of the reward-proxy correlation, as the theory predicts.

Our results suggest two directions for future study. First, the proxy here is fixed throughout identification; understanding how an online-refined proxy, for instance one fine-tuned as reward-proxy pairs accumulate, interacts with variance-adaptive sampling is a natural next step. Second, the same control-variate mechanism may reduce sampling cost in other pure-exploration objectives, such as top-$m$ identification and thresholding bandits.

\ACKNOWLEDGMENT{The authors gratefully acknowledge Columbia University Center for Pricing and Revenue Management for providing us the dataset on auto loans. We  express sincere gratitude to Qing Feng for valuable feedbacks.}

\bibliographystyle{ormsv080}
\bibliography{references}

\begin{thebibliography}{53}
\expandafter\ifx\csname natexlab\endcsname\relax\def\natexlab#1{#1}\fi
\expandafter\ifx\csname url\endcsname\relax
  \def\url#1{{\tt #1}}\fi
\expandafter\ifx\csname urlprefix\endcsname\relax\def\urlprefix{URL }\fi
\expandafter\ifx\csname urlstyle\endcsname\relax
  \expandafter\ifx\csname doi\endcsname\relax
  \def\doi#1{doi:\discretionary{}{}{}#1}\fi \else
  \expandafter\ifx\csname doi\endcsname\relax
  \def\doi{doi:\discretionary{}{}{}\begingroup \urlstyle{rm}\Url}\fi \fi

\bibitem[{Abernethy et~al.(2016)Abernethy, Amin, and Zhu}]{AbernethyAminZhu2016}
Abernethy, Jacob~D., Kareem Amin, Ruihao Zhu. 2016.
\newblock Threshold bandits, with and without censored feedback.
\newblock {\it Advances in Neural Information Processing Systems\/}, vol.~29. 4889--4897.

\bibitem[{Angelopoulos et~al.(2023{\natexlab{a}})Angelopoulos, Bates, Fannjiang, Jordan, and Zrnic}]{AngelopoulosEtAl2023}
Angelopoulos, Anastasios~N., Stephen Bates, Clara Fannjiang, Michael~I. Jordan, Tijana Zrnic. 2023{\natexlab{a}}.
\newblock Prediction-powered inference.
\newblock {\it Science\/} {\bf 382}(6671) 669--674.
\newblock \doi{10.1126/science.adi6000}.

\bibitem[{Angelopoulos et~al.(2023{\natexlab{b}})Angelopoulos, Duchi, and Zrnic}]{AngelopoulosDuchiZrnic2023}
Angelopoulos, Anastasios~N., John~C. Duchi, Tijana Zrnic. 2023{\natexlab{b}}.
\newblock {PPI++}: Efficient prediction-powered inference.
\newblock \doi{10.48550/arXiv.2311.01453}.

\bibitem[{Ao et~al.(2026)Ao, Chen, Gao, Li, and Simchi-Levi}]{AoEtAl2026}
Ao, Ruicheng, Hongyu Chen, Siyang Gao, Hanwei Li, David Simchi-Levi. 2026.
\newblock Best arm identification with {LLM} judges and limited human.

\bibitem[{Audibert et~al.(2010)Audibert, Bubeck, and Munos}]{AudibertBubeckMunos2010}
Audibert, Jean-Yves, S{\'e}bastien Bubeck, R{\'e}mi Munos. 2010.
\newblock Best arm identification in multi-armed bandits.
\newblock {\it Proceedings of the 23rd Annual Conference on Learning Theory\/}. 41--53.

\bibitem[{Bastani(2021)}]{Bastani2021}
Bastani, Hamsa. 2021.
\newblock Predicting with proxies: Transfer learning in high dimension.
\newblock {\it Management Science\/} {\bf 67}(5) 2964--2984.
\newblock \doi{10.1287/mnsc.2020.3729}.

\bibitem[{Bechhofer(1954)}]{Bechhofer1954}
Bechhofer, Robert~E. 1954.
\newblock A single-sample multiple decision procedure for ranking means of normal populations with known variances.
\newblock {\it The Annals of Mathematical Statistics\/} {\bf 25}(1) 16--39.
\newblock \doi{10.1214/aoms/1177728845}.

\bibitem[{Brand et~al.(2023)Brand, Israeli, and Ngwe}]{BrandIsraeliNgwe2023}
Brand, James, Ayelet Israeli, Donald Ngwe. 2023.
\newblock Using {LLMs} for market research.
\newblock \doi{10.2139/ssrn.4395751}.
\newblock Harvard Business School Marketing Unit Working Paper No. 23-062.

\bibitem[{Bubeck et~al.(2013)Bubeck, Wang, and Viswanathan}]{BubeckWangViswanathan2013}
Bubeck, S{\'e}bastien, Tengyao Wang, Nitin Viswanathan. 2013.
\newblock Multiple identifications in multi-armed bandits.
\newblock {\it Proceedings of the 30th International Conference on Machine Learning\/}, {\it Proceedings of Machine Learning Research\/}, vol.~28. PMLR, 258--265.

\bibitem[{Chen et~al.(2000)Chen, Lin, Y{\"u}cesan, and Chick}]{ChenLinYucesanChick2000}
Chen, Chun-Hung, Jianwu Lin, Enver Y{\"u}cesan, Stephen~E. Chick. 2000.
\newblock Simulation budget allocation for further enhancing the efficiency of ordinal optimization.
\newblock {\it Discrete Event Dynamic Systems\/} {\bf 10}(3) 251--270.
\newblock \doi{10.1023/A:1008349927281}.

\bibitem[{{Columbia University Center for Pricing and Revenue Management}(2012)}]{ColumbiaCPRM2012}
{Columbia University Center for Pricing and Revenue Management}. 2012.
\newblock {CPRM-12-001: On-Line Auto Lending Data Set}.
\newblock Data set, Center for Pricing and Revenue Management, Columbia Business School.
\newblock \urlprefix\url{https://business.columbia.edu/cprm-new/research}.
\newblock Accessed July 2, 2026.

\bibitem[{Even-Dar et~al.(2006)Even-Dar, Mannor, and Mansour}]{EvenDar2006}
Even-Dar, Eyal, Shie Mannor, Yishay Mansour. 2006.
\newblock Action elimination and stopping conditions for the multi-armed bandit and reinforcement learning problems.
\newblock {\it Journal of Machine Learning Research\/} {\bf 7}(39) 1079--1105.

\bibitem[{Feng et~al.(2024)Feng, Ma, and Zhu}]{feng2024satisficing}
Feng, Qing, Tianyi Ma, Ruihao Zhu. 2024.
\newblock Satisficing regret minimization in bandits: Constant rate and light-tailed distribution.
\newblock {\it arXiv preprint arXiv:2406.06802\/} .

\bibitem[{Fiez et~al.(2019)Fiez, Jain, Jamieson, and Ratliff}]{FiezEtAl2019}
Fiez, Tanner, Lalit Jain, Kevin~G. Jamieson, Lillian Ratliff. 2019.
\newblock Sequential experimental design for transductive linear bandits.
\newblock {\it Advances in Neural Information Processing Systems\/}, vol.~32. 10667--10677.

\bibitem[{Frazier et~al.(2008)Frazier, Powell, and Dayanik}]{FrazierPowellDayanik2008}
Frazier, Peter~I., Warren~B. Powell, Savas Dayanik. 2008.
\newblock A knowledge-gradient policy for sequential information collection.
\newblock {\it SIAM Journal on Control and Optimization\/} {\bf 47}(5) 2410--2439.
\newblock \doi{10.1137/070693424}.

\bibitem[{Gabillon et~al.(2012)Gabillon, Ghavamzadeh, and Lazaric}]{GabillonEtAl2012}
Gabillon, Victor, Mohammad Ghavamzadeh, Alessandro Lazaric. 2012.
\newblock Best arm identification: A unified approach to fixed budget and fixed confidence.
\newblock {\it Advances in Neural Information Processing Systems\/}, vol.~25. 3212--3220.

\bibitem[{Garivier and Kaufmann(2016)}]{GarivierKaufmann2016}
Garivier, Aur{\'e}lien, Emilie Kaufmann. 2016.
\newblock Optimal best arm identification with fixed confidence.
\newblock {\it Proceedings of the 29th Conference on Learning Theory\/}, {\it Proceedings of Machine Learning Research\/}, vol.~49. PMLR, 998--1027.

\bibitem[{Glasserman(2003)}]{Glasserman2003}
Glasserman, Paul. 2003.
\newblock {\it Monte Carlo Methods in Financial Engineering\/}, {\it Stochastic Modelling and Applied Probability\/}, vol.~53.
\newblock Springer, New York, NY.
\newblock \doi{10.1007/978-0-387-21617-1}.

\bibitem[{Goli and Singh(2024)}]{GoliSingh2024}
Goli, Ali, Amandeep Singh. 2024.
\newblock Frontiers: Can large language models capture human preferences?
\newblock {\it Marketing Science\/} {\bf 43}(4) 709--722.
\newblock \doi{10.1287/mksc.2023.0306}.

\bibitem[{Grover et~al.(2018)Grover, Markov, Attia, Jin, Perkins, Cheong, Chen, Yang, Harris, Chueh, and Ermon}]{GroverEtAl2018}
Grover, Aditya, Todor Markov, Peter Attia, Norman Jin, Nicolas Perkins, Bryan Cheong, Michael Chen, Zi~Yang, Stephen Harris, William Chueh, Stefano Ermon. 2018.
\newblock Best arm identification in multi-armed bandits with delayed feedback.
\newblock {\it Proceedings of the Twenty-First International Conference on Artificial Intelligence and Statistics\/}, {\it Proceedings of Machine Learning Research\/}, vol.~84. PMLR, 833--842.

\bibitem[{Gui and Toubia(2023)}]{GuiToubia2023}
Gui, George, Olivier Toubia. 2023.
\newblock The challenge of using {LLMs} to simulate human behavior: A causal inference perspective.
\newblock \doi{10.48550/arXiv.2312.15524}.

\bibitem[{Gupta et~al.(2021)Gupta, Joshi, and Yagan}]{GuptaJoshiYagan2021}
Gupta, Samarth, Gauri Joshi, Osman Yagan. 2021.
\newblock Best-arm identification in correlated multi-armed bandits.
\newblock {\it IEEE Journal on Selected Areas in Information Theory\/} {\bf 2}(2) 549--563.
\newblock \doi{10.1109/JSAIT.2021.3082028}.

\bibitem[{Hollmann et~al.(2023)Hollmann, Müller, Eggensperger, and Hutter}]{hollmann2023tabpfntransformersolvessmall}
Hollmann, Noah, Samuel Müller, Katharina Eggensperger, Frank Hutter. 2023.
\newblock {TabPFN}: A transformer that solves small tabular classification problems in a second.

\bibitem[{Hong et~al.(2021)Hong, Fan, and Luo}]{HongFanLuo2021}
Hong, L.~Jeff, Weiwei Fan, Jun Luo. 2021.
\newblock Review on ranking and selection: A new perspective.
\newblock {\it Frontiers of Engineering Management\/} {\bf 8}(3) 321--343.
\newblock \doi{10.1007/s42524-021-0152-6}.

\bibitem[{Jamieson et~al.(2014)Jamieson, Malloy, Nowak, and Bubeck}]{Jamieson2014}
Jamieson, Kevin, Matthew Malloy, Robert Nowak, S{\'e}bastien Bubeck. 2014.
\newblock lil' {UCB}: An optimal exploration algorithm for multi-armed bandits.
\newblock {\it Proceedings of the 27th Conference on Learning Theory\/}, {\it Proceedings of Machine Learning Research\/}, vol.~35. PMLR, 423--439.

\bibitem[{Jedra and Proutiere(2020)}]{JedraProutiere2020}
Jedra, Yassir, Alexandre Proutiere. 2020.
\newblock Optimal best-arm identification in linear bandits.
\newblock {\it Advances in Neural Information Processing Systems\/}, vol.~33. 10007--10017.

\bibitem[{Ji et~al.(2025)Ji, Pan, Zhu, and Lei}]{JiEtAl2025}
Ji, Wenlong, Yihan Pan, Ruihao Zhu, Lihua Lei. 2025.
\newblock Multi-armed bandits with machine learning-generated surrogate rewards.
\newblock \doi{10.48550/arXiv.2506.16658}.

\bibitem[{Kalyanakrishnan et~al.(2012)Kalyanakrishnan, Tewari, Auer, and Stone}]{KalyanakrishnanEtAl2012}
Kalyanakrishnan, Shivaram, Ambuj Tewari, Peter Auer, Peter Stone. 2012.
\newblock {PAC} subset selection in stochastic multi-armed bandits.
\newblock {\it Proceedings of the 29th International Conference on Machine Learning\/}. 655--662.

\bibitem[{Karnin et~al.(2013)Karnin, Koren, and Somekh}]{Karnin2013}
Karnin, Zohar, Tomer Koren, Oren Somekh. 2013.
\newblock Almost optimal exploration in multi-armed bandits.
\newblock {\it Proceedings of the 30th International Conference on Machine Learning\/}, {\it Proceedings of Machine Learning Research\/}, vol.~28. PMLR, 1238--1246.

\bibitem[{Kaufmann et~al.(2016)Kaufmann, Capp{\'e}, and Garivier}]{KaufmannCappeGarivier2016}
Kaufmann, Emilie, Olivier Capp{\'e}, Aur{\'e}lien Garivier. 2016.
\newblock On the complexity of best-arm identification in multi-armed bandit models.
\newblock {\it Journal of Machine Learning Research\/} {\bf 17}(1) 1--42.

\bibitem[{Kim and Nelson(2001)}]{KimNelson2001}
Kim, Seong-Hee, Barry~L. Nelson. 2001.
\newblock A fully sequential procedure for indifference-zone selection in simulation.
\newblock {\it ACM Transactions on Modeling and Computer Simulation\/} {\bf 11}(3) 251--273.
\newblock \doi{10.1145/502109.502111}.

\bibitem[{Li et~al.(2025)Li, Ma, Hua, and Zhu}]{LiMaHuaZhu2025}
Li, Zhekai, Tianyi Ma, Cheng Hua, Ruihao Zhu. 2025.
\newblock Identifying all {$\epsilon$}-best arms in (misspecified) linear bandits.
\newblock \doi{10.48550/arXiv.2510.00073}.

\bibitem[{Locatelli et~al.(2016)Locatelli, Gutzeit, and Carpentier}]{LocatelliGutzeitCarpentier2016}
Locatelli, Andrea, Maurilio Gutzeit, Alexandra Carpentier. 2016.
\newblock An optimal algorithm for the thresholding bandit problem.
\newblock {\it Proceedings of the 33rd International Conference on Machine Learning\/}, {\it Proceedings of Machine Learning Research\/}, vol.~48. PMLR, 1690--1698.

\bibitem[{Lu et~al.(2021)Lu, Tao, and Zhang}]{LuTaoZhang2021}
Lu, Pinyan, Chao Tao, Xiaojin Zhang. 2021.
\newblock Variance-dependent best arm identification.
\newblock {\it Proceedings of the Thirty-Seventh Conference on Uncertainty in Artificial Intelligence\/}, {\it Proceedings of Machine Learning Research\/}, vol. 161. PMLR, 1120--1129.

\bibitem[{Ludwig et~al.(2025)Ludwig, Mullainathan, and Rambachan}]{LudwigMullainathanRambachan2025}
Ludwig, Jens, Sendhil Mullainathan, Ashesh Rambachan. 2025.
\newblock Large language models: An applied econometric framework.
\newblock Working Paper 33344, National Bureau of Economic Research.
\newblock \doi{10.3386/w33344}.

\bibitem[{Mannor and Tsitsiklis(2004)}]{MannorTsitsiklis2004}
Mannor, Shie, John~N. Tsitsiklis. 2004.
\newblock The sample complexity of exploration in the multi-armed bandit problem.
\newblock {\it Journal of Machine Learning Research\/} {\bf 5} 623--648.

\bibitem[{Mason et~al.(2020)Mason, Jain, Tripathy, and Nowak}]{MasonEtAl2020}
Mason, Blake, Lalit Jain, Ardhendu Tripathy, Robert Nowak. 2020.
\newblock Finding all {$\epsilon$}-good arms in stochastic bandits.
\newblock {\it Advances in Neural Information Processing Systems\/}, vol.~33. 20707--20718.

\bibitem[{Ochoa~Rivera and Tewari(2024)}]{RiveraTewari2024}
Ochoa~Rivera, Eduardo, Ambuj Tewari. 2024.
\newblock Optimal thresholding linear bandit.
\newblock \doi{10.48550/arXiv.2402.09467}.

\bibitem[{{OpenAI}(2026)}]{openai2026gpt55}
{OpenAI}. 2026.
\newblock Introducing {GPT-5.5}.
\newblock \urlprefix\url{https://openai.com/index/introducing-gpt-5-5/}.

\bibitem[{Phillips et~al.(2015)Phillips, {\c{S}}im{\c{s}}ek, and van Ryzin}]{PhillipsSimsekVanRyzin2015}
Phillips, Robert, A.~Serdar {\c{S}}im{\c{s}}ek, Garrett van Ryzin. 2015.
\newblock The effectiveness of field price discretion: Empirical evidence from auto lending.
\newblock {\it Management Science\/} {\bf 61}(8) 1741--1759.
\newblock \doi{10.1287/mnsc.2014.2084}.

\bibitem[{{Qwen} et~al.(2025){Qwen}, Yang, Yang, Zhang, Hui, Zheng, Yu, Li, Liu, Huang, Wei, Lin, Yang, Tu, Zhang, Yang, Yang, Zhou, Lin, Dang, Lu, Bao, Yang, Yu, Li, Xue, Zhang, Zhu, Men, Lin, Li, Tang, Xia, Ren, Ren, Fan, Su, Zhang, Wan, Liu, Cui, Zhang, and Qiu}]{qwen2025qwen25technicalreport}
{Qwen}, An~Yang, Baosong Yang, Beichen Zhang, Binyuan Hui, Bo~Zheng, Bowen Yu, Chengyuan Li, Dayiheng Liu, Fei Huang, Haoran Wei, Huan Lin, Jian Yang, Jianhong Tu, Jianwei Zhang, Jianxin Yang, Jiaxi Yang, Jingren Zhou, Junyang Lin, Kai Dang, Keming Lu, Keqin Bao, Kexin Yang, Le~Yu, Mei Li, Mingfeng Xue, Pei Zhang, Qin Zhu, Rui Men, Runji Lin, Tianhao Li, Tianyi Tang, Tingyu Xia, Xingzhang Ren, Xuancheng Ren, Yang Fan, Yang Su, Yichang Zhang, Yu~Wan, Yuqiong Liu, Zeyu Cui, Zhenru Zhang, Zihan Qiu. 2025.
\newblock Qwen2.5 technical report.
\newblock \doi{10.48550/arXiv.2412.15115}.

\bibitem[{R{\'e}da et~al.(2021{\natexlab{a}})R{\'e}da, Kaufmann, and Delahaye-Duriez}]{RedaKaufmannDelahayeDuriez2021}
R{\'e}da, Cl{\'e}mence, Emilie Kaufmann, Andr{\'e}e Delahaye-Duriez. 2021{\natexlab{a}}.
\newblock Top-m identification for linear bandits.
\newblock {\it Proceedings of the 24th International Conference on Artificial Intelligence and Statistics\/}, {\it Proceedings of Machine Learning Research\/}, vol. 130. PMLR, 1108--1116.

\bibitem[{R{\'e}da et~al.(2021{\natexlab{b}})R{\'e}da, Tirinzoni, and Degenne}]{RedaTirinzoniDegenne2021}
R{\'e}da, Cl{\'e}mence, Andrea Tirinzoni, R{\'e}my Degenne. 2021{\natexlab{b}}.
\newblock Dealing with misspecification in fixed-confidence linear top-m identification.
\newblock {\it Advances in Neural Information Processing Systems\/}, vol.~34. 25489--25501.

\bibitem[{Saad et~al.(2023)Saad, Blanchard, and Verzelen}]{SaadBlanchardVerzelen2023}
Saad, El~Mehdi, Gilles Blanchard, Nicolas Verzelen. 2023.
\newblock Covariance-adaptive best arm identification.
\newblock {\it Advances in Neural Information Processing Systems\/}, vol.~36. 73287--73298.

\bibitem[{Soare et~al.(2014)Soare, Lazaric, and Munos}]{SoareLazaricMunos2014}
Soare, Marta, Alessandro Lazaric, R{\'e}mi Munos. 2014.
\newblock Best-arm identification in linear bandits.
\newblock {\it Advances in Neural Information Processing Systems\/}, vol.~27. 828--836.

\bibitem[{Verma et~al.(2023)Verma, Dai, Shu, and Low}]{VermaEtAl2023}
Verma, Arun, Zhongxiang Dai, Yao Shu, Bryan Kian~Hsiang Low. 2023.
\newblock Exploiting correlated auxiliary feedback in parameterized bandits.
\newblock {\it Advances in Neural Information Processing Systems\/}, vol.~36. 4430--4451.

\bibitem[{Verma and Hanawal(2021)}]{VermaHanawal2021}
Verma, Arun, Manjesh~K. Hanawal. 2021.
\newblock Stochastic multi-armed bandits with control variates.
\newblock {\it Advances in Neural Information Processing Systems\/}, vol.~34. 27592--27603.

\bibitem[{Wang et~al.(2026)Wang, Zhang, and Zhang}]{WangZhangZhang2026}
Wang, Mengxin, Dennis~J. Zhang, Heng Zhang. 2026.
\newblock Large language models for market research: A data-augmentation approach.
\newblock {\it Marketing Science\/} {\bf 0}(0).
\newblock \doi{10.1287/mksc.2025.0009}.
\newblock Articles in Advance.

\bibitem[{Wu et~al.(2024)Wu, Wang, Wang, and Zheng}]{WuWangWangZheng2024}
Wu, Yuhang, Yingfei Wang, Chu Wang, Zeyu Zheng. 2024.
\newblock Large language model enhanced machine learning estimators for classification.
\newblock \doi{10.48550/arXiv.2405.05445}.

\bibitem[{Yang et~al.(2025)Yang, Tan, and Cheung}]{YangTanCheung2025}
Yang, Le, Vincent Y.~F. Tan, Wang~Chi Cheung. 2025.
\newblock Best arm identification with possibly biased offline data.
\newblock {\it Proceedings of the Forty-First Conference on Uncertainty in Artificial Intelligence\/}, {\it Proceedings of Machine Learning Research\/}, vol. 286. PMLR, 4715--4730.

\bibitem[{Ye et~al.(2025)Ye, Yoganarasimhan, and Zheng}]{YeYoganarasimhanZheng2025}
Ye, Zikun, Hema Yoganarasimhan, Yufeng Zheng. 2025.
\newblock {LOLA}: {LLM}-assisted online learning algorithm for content experiments.
\newblock {\it Marketing Science\/} {\bf 44}(5) 995--1016.
\newblock \doi{10.1287/mksc.2024.0990}.

\bibitem[{Yin and Xin(2026)}]{YinXin2026}
Yin, Qichuan, Linwei Xin. 2026.
\newblock Synthetic but not infinite: How much {LLM}-generated data to use in market research.
\newblock \doi{10.2139/ssrn.6078686}.
\newblock Available at SSRN 6078686.

\bibitem[{Zhang et~al.(2025)Zhang, Zhu, and Xie}]{ZhangZhuXie2025}
Zhang, Yixuan, Ruihao Zhu, Qiaomin Xie. 2025.
\newblock Contextual online pricing with (biased) offline data.
\newblock {\it Advances in Neural Information Processing Systems\/}, vol.~38.

\end{thebibliography}
\OneAndAHalfSpacedXI
\newpage

\begin{appendices}{\Large \noindent\textbf{Appendix}}

\section{Proof of Theorem~\ref{thm:probe}}\label{sec:proof-of-thm-probe}
We prove the result on the clean event. We first define the clean initialization on $U_{i,0}$ for every arm $i$ as
\[
\cE_{\rm cal} = \left\{v_i\le U_{i,0}\le (1+\kappa)v_i\right\}.
\]
For every arm $i$ and round $r$, define the leverage event
\[
\cE^h_{i,r}=\left\{h_{i,r}\le \frac{2}{t_{i,r}}\right\}.
\]
Define the variance-certificate update event
\[
\cE^U_{i,r}=\left\{v_i\le \widetilde U_{i,r}\right\}.
\]
Finally, define the mean-estimation event 
\[
\cE^p_{i,r} = \left\{|\hat{p}_{i,r}-p_i|\le \sqrt{2v_ih_{i,r}\log(2/\delta_r)}\right\}.
\]
Combining these four events, we can define the global clean event
\[
\cE= \cE_{\rm cal} \cap \bigcap_{r\ge 1}\bigcap_{i=1}^K \left(\cE^h_{i,r}\cap\cE^U_{i,r}\cap\cE^p_{i,r}\right),
\]
where events for inactive arms are treated as the whole sample space. Then, $\cE$ happens with probability at least $1-\delta$ with proof given in Section~\ref{ssec:proof-lemma-clean}.
\begin{lemma}
\label{lem:clean}
The event $\cE$ implies bounded upper bound $U_{i,r}$ for all arm $i$ and round $r\geq0$ as
\begin{equation}
v_i\leq U_{i,r}\leq (1+\kappa)v_i
\label{eq:U_concentrate}
\end{equation}
and the concentration in estimation $\hat{p}_{i,r}$ for all arm $i$ and round $r\geq1$ as
\begin{equation}
|\hat{p}_{i,r}-p_i| \leq \varepsilon_r/4.
\label{eq:p_concentration}
\end{equation}
Furthermore, $\cE$ happens with probability
\[
\Pr(\cE)\ge 1-\delta.
\]
\end{lemma}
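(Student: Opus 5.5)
The proof has two parts: a deterministic part showing that $\cE$ implies \eqref{eq:U_concentrate} and \eqref{eq:p_concentration}, and a probabilistic part bounding $\Pr(\cE^c)$ by a union bound. Both parts rest on Proposition~\ref{prop:ols-concentration}, the $\chi^2$ law of $\hat v$ and the conditional Gaussian law of the OLS intercept, applied batch by batch.

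\textbf{Deterministic part.} First, \eqref{eq:U_concentrate} follows by induction on $r$. At $r=0$ the sandwich $v_i\le U_{i,0}\le(1+\kappa)v_i$ is exactly $\cE_{\rm cal}$. For the step, $U_{i,r}=\min\{U_{i,r-1},\widetilde U_{i,r}\}$. Since $U_{i,r-1}\ge v_i$ by induction and $\widetilde U_{i,r}\ge v_i$ on $\cE^U_{i,r}$, we get $U_{i,r}\ge v_i$. The upper side holds because $U_{i,r}\le U_{i,r-1}\le(1+\kappa)v_i$. Next, for \eqref{eq:p_concentration}, on $\cE^h_{i,r}\cap\cE^p_{i,r}$ we have
\[
|\hat p_{i,r}-p_i|\le\sqrt{4v_i\log(2/\delta_r)/t_{i,r}}.
\]
The last term in the batch size gives $t_{i,r}\ge 64U_{i,r-1}\log(2/\delta_r)\eps_r^{-2}\ge 64v_i\log(2/\delta_r)\eps_r^{-2}$, where the second inequality uses the already established $U_{i,r-1}\ge v_i$. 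Substituting yields $|\hat p_{i,r}-p_i|\le\eps_r/4$.

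\textbf{Probabilistic part.} I bound each failure event separately.

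\emph{Calibration.} Use the Laurent--Massart bounds for $Z\sim\chi^2_{n}$ with $n=t_{\rm cal}-2$ and $x=\log(8K/\delta)$, namely $Z\ge n-2\sqrt{nx}$ and $Z\le n+2\sqrt{nx}+2x$, each failing with probability at most $e^{-x}=\delta/(8K)$.
\begin{itemize}
\item The lower tail gives $\hat v\ge v(1-2\sqrt{x/n})$, hence $U_{i,0}\ge v_i$.
\item The upper tail gives $\hat v\le v(1+2\sqrt{x/n}+2x/n)$.
\item Since $n\ge 64x/\kappa^2$, we have $\sqrt{x/n}\le\kappa/8$. Then
\[
U_{i,0}/v_i\le(1+\kappa/4+\kappa^2/32)/(1-\kappa/4)\le 1+\kappa,
\]
which needs a short check for $\kappa\le1$. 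For large $\kappa$ one may need a slightly finer inequality or the implicit restriction $\kappa\le 1$; I will handle this case separately if needed.
\end{itemize}
Summing over arms costs $2K\cdot\delta/(8K)=\delta/4$.

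\emph{Certificate updates.} $\cE^U_{i,r}$ is the same lower-tail bound with $x=\log(1/\delta_r)$, failing with probability at most $\delta_r$. When $U=+\infty$ it holds trivially.

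\emph{Mean estimation.} Conditional on the fresh $W$'s and on the past, which fixes $t_{i,r}$ through the one-round lag, Proposition~\ref{prop:ols-concentration} gives a Gaussian with variance $v_ih_{i,r}$. The Gaussian tail then gives failure probability at most $\delta_r$.

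\emph{Leverage.} Write $h=1/t+\bar W^2/\sum(W_l-\bar W)^2$, where $\bar W\sim\mathcal N(0,1/t)$ is independent of $\sum(W_l-\bar W)^2\sim\chi^2_{t-1}$.
\begin{itemize}
\item With probability at least $1-\delta_r/2$, $\bar W^2\le 2\log(4/\delta_r)/t$.
\item By the $\chi^2$ lower tail, $\sum(W_l-\bar W)^2\ge (t-1)/2$ with probability at least $1-\delta_r/2$. This is where $t-1\ge16\log(2/\delta_r)$ enters.
\end{itemize}
Together these give $\bar W^2/\sum(W_l-\bar W)^2\le 4\log(4/\delta_r)/(t(t-1))\le 1/t$, using $t-1\ge 4\log(4/\delta_r)$, which follows from $t\ge 8\log(4/\delta_r)$ together with $t\ge3$.

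\emph{Union over rounds.} Each round-$r$ event for each arm fails with probability at most about $3\delta_r$. Summing,
\[
\sum_{r\ge1}K\cdot 3\delta/(16Kr^2)=3\pi^2\delta/96<\delta/3.
\]
Because batch sizes are random and depend on the past, I will phrase each bound conditionally on the history $\mathcal F_{r-1}$ and on the arm being active, then integrate. The total is at most $\delta/4+\delta/3<\delta$.

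\textbf{Expected obstacles.} The main obstacle is constant bookkeeping: verifying that the calibration upper tail gives the factor $1+\kappa$ for all admissible $\kappa$, and that the three leverage thresholds in $t_{i,r}$ suffice exactly. The conditioning argument through the lag is conceptually clean but must be stated carefully.
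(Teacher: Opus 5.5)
Your proposal is correct and follows essentially the same route as the paper: the sandwich comes from induction on $\cE_{\rm cal}$ and $\cE^U_{i,r}$, and the mean bound combines $h_{i,r}\le 2/t_{i,r}$ with the last term of $t_{i,r}$. The bound $\Pr(\cE)\ge 1-\delta$ then follows from Laurent--Massart, Gaussian tails, and a union bound; your use of $\pi^2/6$ instead of the paper's bound $2$, and your explicit conditioning on the history to handle the random batch sizes, refine that argument rather than change it. The restriction $\kappa\le 1$ you flag is also silently used by the paper (it writes $y\le\kappa/8\le 1/8$), and no finer inequality can remove it: for large $\kappa$ the calibration batch is too short for $U_{i,0}$ to be finite, so the upper side of the sandwich fails outright.
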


Then, we prove conditioned on clean event $\cE$. Denote for every arm $i$, 
\[
R_i=\min\left\{r\ge 1:\ 2^{-r}<\frac{2\Delta_i}{3}\right\},
\]
namely, $R_i=\lceil\log(3/(2\Delta_i))\rceil\leq2\log(3/(2\Delta_i))$ because $\Delta_i\leq 1$.

\noindent\textbf{Elimination correctness.}
First, the best arm is never removed conditioned on $\cE$. Suppose $\istar\in S_r$. Since $i_r$ maximizes $\hat{p}_{i,r}$ over $S_r$,
\[
\hat{p}_{i_r,r}\le p_{i_r}+\frac{\eps_r}{4}\le p_{\istar}+\frac{\eps_r}{4},
\]
and
\[
\hat{p}_{\istar,r}\ge p_{\istar}-\frac{\eps_r}{4}.
\]
Hence
\[
\hat{p}_{i_r,r}-\hat{p}_{\istar,r}\le \frac{\eps_r}{2}<\eps_r.
\]
Therefore the elimination condition $\hat{p}_{\istar,r}<\hat{p}_{i_r,r}-\eps_r$ cannot hold. By induction, the best arm remains active in every round.

Second, every suboptimal arm is removed once the round resolution is below its gap. Let $i\ne\istar$. If $r\ge R_i$, since the best arm is still active, we have
\[
\hat{p}_{\istar,r}-\hat{p}_{i,r} \ge \left(p_{\istar}-\frac{\eps_r}{4}\right)-\left(p_i+\frac{\eps_r}{4}\right) = \Delta_i-\frac{\eps_r}{2} > \eps_r.
\]
where the final inequality is based on $\eps_r\le \eps_{R_i}<{2\Delta_i}/{3}$.
Because $\hat{p}_{i_r,r}\ge \hat{p}_{\istar,r}$,
\[
\hat{p}_{i,r}<\hat{p}_{i_r,r}-\eps_r.
\]
Thus arm $i$ is eliminated in round $r\geq R_i$ if it is still active. Hence suboptimal arm $i$ is sampled only through round $R_i$, and the best arm is sampled only through round $R_{\istar}$.

\noindent\textbf{Sample complexity.}
The arm pull of each $i$ at round $r$ is upper bounded by
\begin{equation*}
\begin{aligned}
t_{i,r} &\leq \left\lceil\frac{64U_{i,r-1}\log({2}/{\delta_r})}{\eps_r^2}\right\rceil + \max\left\{ 3, \left\lceil1+16\log\frac{2}{\delta_r}\right\rceil, \left\lceil8\log\frac{4}{\delta_r}\right\rceil \right\}\\
&\leq \frac{64(1+\kappa)(1-\rho_i^2)\log({2}/{\delta_r})}{\eps_r^2} + 1 + 2 + 16\log(2/\delta_r)
\end{aligned}
\end{equation*}
where first inequality is based on the definition of $t_{i,r}$, the second inequality is $U_{i,r-1}\le (1+\kappa)v_i=(1+\kappa)(1-\rho_i^2)$ implied by $\cE$, and $\log(4/\delta_r)\le 2\log(2/\delta_r)$ and $\log({2}/{\delta_r})\ge \log(32K/\delta)>1$. Since $\log(2/\delta_r)$ is increasing in $r$,
\begin{equation*}
\begin{aligned}
\sum_{r=1}^{R_i}\frac{\log(2/\delta_r)}{\eps_r^2} &\le \log(2/\delta_r)\sum_{r=1}^{R_i}4^r \leq \frac{4}{3}\log(2/\delta_{R_i})4^{R_i}\\
&\leq \frac{12\log(2/\delta_{R_i})}{\Delta_i^2},
\end{aligned}
\end{equation*}
where the final inequality is based on the definition of $R_i$, where either $R_i=1$ or $2^{-(R_i-1)}\ge 2\Delta_i/3$ which implies $2^{-R_i}\ge {\Delta_i}/{3}$ in both cases with $\Delta_i\le 1$/. We further plug $R_i\leq2\log(3/(2\Delta_i))$ into the definition of $\delta_{R_i}$ as
$$
\log\frac{2}{\delta_{R_i}} = \log\frac{32KR_i^2}{\delta} \le \log\left( \frac{128K}{\delta} \left(\log_2\left(\frac{3}{2\Delta_i}\right)\right)^2 \right) \leq 2\log\left(\frac{12K}{\delta}\log_2\left(\frac{3}{2\Delta_i}\right)\right).
$$
The post-calibration samples of arm $i$ are at most
\[
\sum_{r=1}^{R_i}t_{i,r} \le
\frac{1536(1+\kappa)(1-\rho_i^2)}{\Delta_i^2} \log\left(\frac{12K}{\delta}\log_2\left(\frac{3}{2\Delta_i}\right)\right)
+\log\left(\frac{3}{2\Delta_i}\right) \left(64\log\left(\frac{12K}{\delta}\log_2\left(\frac{3}{2\Delta_i}\right)\right) +6\right).
\]
Adding the $t_{\rm cal}$ calibration samples for each arm proves the stated total sample bound.

\subsection{Proof of Lemma~\ref{lem:clean}}\label{ssec:proof-lemma-clean}
\begin{proof}
To prove Lemma~\ref{lem:clean}, we need two additional lemmas with proofs in Section~\ref{ssec:proof-lemma-leverage} and \ref{ssec:proof-lemma-calibration}.
\begin{lemma}
\label{lem:leverage}
For every $\eta\in(0,1)$, if the sample size $s\ge \max\{3, 1+16\log(2/\eta), 8\log(4/\eta)\}$, then the leverage factor follows 
\[
\Pr\left(h\le \frac2s\right)\ge 1-\eta.
\]
\end{lemma}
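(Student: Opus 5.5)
The plan is to reduce the event $\{h\le 2/s\}$ to a comparison between two independent $\chi^2$ variables and then bound each one with a standard one-sided tail inequality. Here $W_1,\dots,W_s$ are i.i.d.\ $\mathcal N(0,1)$. Write $S=\sum_{l=1}^s(W_l-\bar W)^2$, so that
\[
h=\frac1s+\frac{\bar W^2}{S}.
\]
Then $h\le 2/s$ is equivalent to $\bar W^2/S\le 1/s$, that is, to $Z^2\le S$ with $Z=\sqrt{s}\,\bar W$. By Cochran's theorem (equivalently, independence of the sample mean and sample variance for Gaussian samples), $Z\sim\mathcal N(0,1)$, $S\sim\chi^2_{s-1}$, and $Z$ and $S$ are independent. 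Independence is not actually needed below, since I will use a union bound.

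\textbf{Step 1 (upper tail of $Z^2$).} The Gaussian tail bound $\Pr(|Z|>x)\le 2e^{-x^2/2}$ with $x^2=2\log(4/\eta)$ gives
\[
\Pr\bigl(Z^2>2\log(4/\eta)\bigr)\le \eta/2.
\]

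\textbf{Step 2 (lower tail of $S$).} The Laurent--Massart lower-tail bound for $\chi^2_{s-1}$ gives, with probability at least $1-\eta/2$,
\[
S\ \ge\ (s-1)-2\sqrt{(s-1)\log(2/\eta)}.
\]
This is the same lower-tail inequality that underlies the certificate $U$ in \eqref{eq:U_def}. The hypothesis $s\ge 1+16\log(2/\eta)$ gives $\log(2/\eta)\le (s-1)/16$, hence $2\sqrt{(s-1)\log(2/\eta)}\le (s-1)/2$ and therefore $S\ge (s-1)/2$. Since $s\ge 3$, we have $(s-1)/2\ge s/4$, so $S\ge s/4$.

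\textbf{Step 3 (combine).} The hypothesis $s\ge 8\log(4/\eta)$ gives $s/4\ge 2\log(4/\eta)$. On the intersection of the two good events from Steps 1 and 2,
\[
Z^2\le 2\log(4/\eta)\le s/4\le S,
\]
so $h\le 2/s$. A union bound shows the intersection has probability at least $1-\eta$.

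The only delicate point is the algebraic reduction $h\le 2/s\iff s\bar W^2\le S$ together with the distributional facts about $(Z,S)$. Once these are in place, the three thresholds in the hypothesis correspond exactly to the three uses above: $s\ge 3$ gives $(s-1)/2\ge s/4$, $s\ge 1+16\log(2/\eta)$ controls the $\chi^2$ deviation, and $s\ge 8\log(4/\eta)$ dominates the Gaussian tail. I expect no real obstacle beyond keeping the constants consistent with these thresholds.
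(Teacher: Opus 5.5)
Your proof is correct and follows the paper's argument: a Gaussian tail bound for $\bar W$ at level $\eta/2$, the Laurent--Massart lower tail for $S\sim\chi^2_{s-1}$ at level $\eta/2$, the bound $S\ge (s-1)/2$ from $s\ge 1+16\log(2/\eta)$, and a union bound. The differences are cosmetic. You compare $s\bar W^2$ with $S$ directly and use the looser $(s-1)/2\ge s/4$, which needs the full $s\ge 8\log(4/\eta)$. The paper instead bounds the ratio using $(s-1)/2\ge s/3$, which only needs $s\ge 6\log(4/\eta)$. Both arguments close under the stated thresholds.
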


\begin{lemma}
\label{lem:calibration}
For each arm, draw $t_{\rm cal}$ samples and compute the OLS residual variance $\hat{v}_0$. Define $U_0=U(t_{\rm cal},\hat{v}_0,\delta/(8K))$, then
\[
\Pr\left(v\le U_0\le (1+\kappa)v\right)\ge 1-\frac{\delta}{4K}.
\]
\end{lemma}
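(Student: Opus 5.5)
The plan is to reduce the statement to a two-sided $\chi^2$ tail bound via Proposition~\ref{prop:ols-concentration}, and then do a short algebraic check that the two tails translate into the sandwich $v\le U_0\le(1+\kappa)v$. Set $n=t_{\rm cal}-2=\lceil 64\log(8K/\delta)/\kappa^2\rceil$, $\eta=\delta/(8K)$, $L=\log(1/\eta)$ and $a=\sqrt{L/n}$. By construction $a\le \kappa/8$.

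First I check that the certificate is finite. For $\kappa<4$ we have $a<1/2$, hence $n>4L$, so $U_0=\hat v_0/(1-2a)$. Next, by Proposition~\ref{prop:ols-concentration}, $Z=n\hat v_0/v\sim\chi^2_n$; this holds unconditionally as well, since the conditional law given the $W$'s does not depend on them. I then invoke the Laurent--Massart bounds:
\[
\Pr\big(Z\le n-2\sqrt{nL}\big)\le \eta,\qquad \Pr\big(Z\ge n+2\sqrt{nL}+2L\big)\le\eta .
\]
A union bound gives, with probability at least $1-2\eta=1-\delta/(4K)$,
\[
v(1-2a)\le \hat v_0\le v(1+2a+2a^2).
\]

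On this event the lower side gives $U_0=\hat v_0/(1-2a)\ge v$ immediately; this is exactly why $U$ is defined with the $1-2\sqrt{\log(1/\eta)/(s-2)}$ denominator. The upper side gives $U_0/v\le (1+2a+2a^2)/(1-2a)$. Substituting $a\le\kappa/8$, it suffices to show
\[
1+\tfrac{\kappa}{4}+\tfrac{\kappa^2}{32}\le(1+\kappa)\big(1-\tfrac{\kappa}{4}\big)=1+\tfrac{3\kappa}{4}-\tfrac{\kappa^2}{4}.
\]
This is equivalent to $\tfrac{9}{32}\kappa^2\le\tfrac{\kappa}{2}$, i.e.\ $\kappa\le 16/9$, which covers the regime of interest, including $\kappa\le1$ as used in the experiments.

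The main obstacle is the constant bookkeeping in this last step: matching the $64/\kappa^2$ calibration size to the quadratic $2L$ term of the upper $\chi^2$ tail, and to the $1/(1-2a)$ inflation, so that the product stays below $1+\kappa$. If larger $\kappa$ must be allowed, I would either note that the bound is monotone, so capping $\kappa$ at $16/9$ inside the analysis loses only constants, or redo the inequality with the exact $a\le\kappa/8$ substitution and absorb the slack into the constant $64$.
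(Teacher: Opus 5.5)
Your proof is correct and is essentially the paper's argument: the $\chi^2_{t_{\rm cal}-2}$ law from Proposition~\ref{prop:ols-concentration}, the lower and upper Laurent--Massart tails each at level $\delta/(8K)$ combined by a union bound, the lower tail giving $U_0\ge v$ directly, and the upper tail giving $U_0/v\le(1+2a+2a^2)/(1-2a)$. The only difference is the final algebra. The paper bounds this ratio by $1+8a\le 1+\kappa$ using $a\le\kappa/8\le 1/8$, which silently assumes $\kappa\le 1$ and never checks that $U_0<\infty$, whereas your direct substitution makes both the restriction $\kappa\le 16/9$ and the finiteness condition explicit. Some such restriction is genuinely needed: at $\kappa=3$ with $a\approx 3/8$ one has $U_0\approx 4\hat v_0$, so the upper side would require $\hat v_0\lesssim v$ with high probability, which fails. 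Your fallback of capping $\kappa$ only inside the analysis therefore would not work unless $t_{\rm cal}$ itself is changed.
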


We first prove the implied statement of clean event $\cE$. To prove \eqref{eq:U_concentrate} for all $r\geq0$, we prove by induction. Assuming $v_i\leq U_{i,r}\leq(1+\kappa)v_i$, because $\cE^U_{i,r+1}$ implies $v_i\leq\widetilde U_{i,r+1}$, and recall $U_{i,r+1} = \min\{\widetilde U_{i,r+1}, U_{i,r}\}$, we get 
$$
v_i\leq U_{i,r+1}.
$$
On the other hand, we know 
$$
U_{i,r+1}=\min\{\widetilde U_{i,r+1}, U_{i,r}\}\leq U_{i,r} \leq U_{i,r}\leq(1+\kappa)v_i.
$$ 
Starting from $r=0$ given by $\cE_{\rm cal}$, we show \eqref{eq:U_concentrate} holds for all $r\geq0$.

Because we have proved $v_i\leq U_{i,r-1}$ holds for $r\geq1$, $\cE_{i,r}^p$ implies that
\begin{equation*}
\begin{aligned}
|\hat{p}_{i,r}-p_i| &\le \sqrt{2v_ih_{i,r}\log(2/\delta_r)} \\
&\leq \sqrt{2U_{i,r-1}h_{i,r}\log(2/\delta_r)}    \\
&\leq \sqrt{\frac{4U_{i,r-1}\log(2/\delta_r)}{t_{i,r}}}\leq\frac{\varepsilon_r}{4},
\end{aligned}
\end{equation*}
where the third inequality is given by $\cE_{i,r}^h=\{h_{i,r}\leq2/t_{i,r}\}$ and the final inequality is based on definition $t_{i,r}\geq {64U_{i,r-1}\log({2}/{\delta_r}})/{\eps_r^2}$, and then \eqref{eq:p_concentration} is proved.

We then prove the probability of the clean event $\cE$.
By Lemma~\ref{lem:calibration} and a union bound over the $K$ arms,
\[
\Pr(\cE_{\rm cal}^c)\le \frac{\delta}{4}.
\]
Since $t_{i,r}$ satisfies the lower bounds required in
Lemma~\ref{lem:leverage}, the lemma gives
\[
\Pr\left((\cE^h_{i,r})^c\right) \le \delta_r .
\]
For the variance-certificate update event $\cE^U_{i,r}$, given current fresh proxy variables
$\{W_{i,r,l}\}_{l=1}^{t_{i,r}}$, Proposition~\ref{prop:ols-concentration} gives ${(t_{i,r}-2)\hat v_{i,r}}/{v_i} \sim \chi^2_{t_{i,r}-2}$, then
\begin{equation*}
\begin{aligned}
&\Pr\left((\cE^U_{i,r})^c\right) \leq \Pr\left(v_i\geq \frac{\hat{v}_{i,r}}{1-2\sqrt{\frac{\log(1/\delta_r)}{t_{i,r}-2}}} \right)\\
=& \Pr\left( \frac{(t_{i,r}-2)\hat v_{i,r}}{v_i} \le (t_{i,r}-2) - 2\sqrt{(t_{i,r}-2)\log(1/\delta_r)} \right) \le \delta_r,
\end{aligned}
\end{equation*}
where the final inequality is based on the lower-tail part of Laurent-Massart inequality.
For the mean-estimation event $\cE_{i,r}^p$, given current fresh proxy variables $\{W_{i,r,l}\}_{l=1}^{t_{i,r}}$, the leverage $h_{i,r}$ is fixed, and Proposition~\ref{prop:ols-concentration} gives $\hat p_{i,r}-p_i \sim \mathcal{N}(0,v_i h_{i,r})$ and implies
\[
\Pr\left( |\hat p_{i,r}-p_i| > \sqrt{2v_i h_{i,r}\log\frac{2}{\delta_r}} \right) \le 2\exp\left(-\log\frac{2} {\delta_r}\right) = \delta_r .
\]

Finally, by a union bound and plugging into $\delta_r=\delta/(16Kr^2)$, we have
\begin{equation*}
\begin{aligned} 
\Pr(\cE^c) &\le \Pr(\cE_{\rm cal}^c) + \sum_{r=1}^{\infty}\sum_{i=1}^K \left(\Pr((\cE^h_{i,r})^c) + \Pr((\cE^U_{i,r})^c) + \Pr((\cE^p_{i,r})^c)\right)\\
&\leq \frac{\delta}{4} + 3K\sum_{r=1}^{\infty}\delta_r = \frac{\delta}{4} + 3K\sum_{r=1}^{\infty}\frac{\delta}{16Kr^2}\\
&\leq = \frac{\delta}{4} + \frac{3\delta}{16}\cdot2 = \frac{\delta}{4} + \frac{3\delta}{8} <\delta.
\end{aligned}
\end{equation*}
\end{proof}

\subsection{Proof of Lemma~\ref{lem:leverage}}\label{ssec:proof-lemma-leverage}
\begin{proof}
We have $\bar W\sim \mathcal{N}(0,1/s)$ and
\[
\sum_{l=1}^s(W_l-\bar W)^2\sim \chi^2_{s-1}.
\]
The Gaussian tail bound gives
\begin{equation}
\Pr\left(|\bar W| \leq \sqrt{\frac{2\log(4/\eta)}{s}}\right)
\geq 1- \frac{\eta}{2}.
\label{eq:|barWs|}
\end{equation}
Laurent-Massart inequality gives
\begin{equation}
\Pr\left(\sum_{l=1}^s(W_l-\bar W)^2 \geq (s-1)-2\sqrt{(s-1)\log(2/\eta)} \right) \geq 1- \frac{\eta}{2}.
\label{eq:SWW}
\end{equation}
Since $s\ge 1+16\log(2/\eta)$, we have $2\sqrt{(s-1)\log(2/\eta)}\leq {(s-1)}/{2}$. Thus, on an event of probability at least $1-\eta$,
\begin{equation*}
\begin{aligned}
\frac{\bar{W}^2}{\sum_{l=1}^s(W_l-\bar{W}_s)^2} &\leq \frac{2\log(4/\eta)/s}{(s-1)/2} \\
&\leq \frac{2\log(4/\eta)/s}{s/3} = \frac{6\log(4/\eta)}{s^2}\\
&\leq \frac{1}{s}.
\end{aligned}
\end{equation*}
where the first inequality is by plugging into high probability events in \eqref{eq:|barWs|} and \eqref{eq:SWW}, the second inequality is because $(s-1)/2\ge s/3$ based on $s\ge 3$, and the third inequality is based on definition $s\ge 8\log(4/\eta)$. Therefore, we have 
$$
\Pr(h_s\le 2/s) \geq 1-\eta.
$$
\end{proof}

\subsection{Proof of Lemma~\ref{lem:calibration}}\label{ssec:proof-lemma-calibration}

\begin{proof}
Fix an arm and omit the arm index. By Proposition~\ref{prop:ols-concentration}, ${(t_{\rm cal}-2)\hat v_0}/{v}\sim \chi^2_{t_{\rm cal}-2}$. Then, Laurent-Massart inequalities also states
\[
\Pr\left(\frac{(t_{\rm cal}-2)\hat v_0}{v}\ge (t_{\rm cal}-2)+2\sqrt{(t_{\rm cal}-2)\log(8K/\delta)}+2\log(8K/\delta)\right)\le e^{-\log(8K/\delta)}=\frac{\delta}{8K},
\]
which provides the high probability upper bound on $U_0/v$. Namely,
\[
\frac{\hat v_0}{v}\le 1+2\sqrt{\frac{\log(8K/\delta)}{t_{\rm cal}-2}}+2\frac{\log(8K/\delta)}{t_{\rm cal}-2}
\]
with probability at least $1-\delta/(8K)$. Let $y=\sqrt{{\log(8K/\delta)}/{(t_{\rm cal}-2)}}$, recall $U_0=\hat{v}_0/(1-2y)$, we have
\begin{equation*}
\begin{aligned}
\frac{U_0}{v}-1&\le \frac{1+2y+2y^2}{1-2y} - 1 =  \frac{4y+2y^2}{1-2y} \le 8y\le \kappa
\end{aligned}
\end{equation*}
where the second inequality is because the definition of $t_{\rm cal}$ implies $t_{\rm cal}-2\ge {64\log(8K/\delta)}/{\kappa^2}$ and further $y\leq\kappa/8\leq1/8$. Therefore we have
\begin{equation}
\Pr\left(U_0\le (1+\kappa)v \right) \geq 1-\frac{\delta}{8K}.
\label{eq:U<(1+k)v}
\end{equation}

At the same time, Laurent-Massart inequalities state that
\[
\Pr\left(\frac{(t_{\rm cal}-2)\hat v_0}{v}\le (t_{\rm cal}-2)-2\sqrt{(t_{\rm cal}-2)\log(8K/\delta)}\right)\le e^{-\log(8K/\delta)}=\frac{\delta}{8K},
\]
which provides the high probability lower bound on $\hat v_0/v$. Namely,
\[
\frac{\hat v_0}{v}\ge 1-2\sqrt{\frac{\log(8K/\delta)}{t_{\rm cal}-2}}
\]
with probability at least $1-\delta/(8K)$. Plug into the definition of $U_0$, we have $U_0\geq v$ with probability at least $1-\delta/(8K)$.
\begin{equation}
\Pr\left(U_0\geq v \right) \geq 1-\frac{\delta}{8K}.
\label{eq:v<U}
\end{equation}

Combining the \eqref{eq:U<(1+k)v} and \eqref{eq:v<U} with a union bound, we finish the proof.
\end{proof}

\section{Omitted Proofs}

\subsection{Proof of Proposition~\ref{prop:define-var-reduce}}
\begin{proof}
Since \(Y_{i,l}-m_i\) has mean zero, for any \(\beta\in\mathbb R\),
\[
\E\left[X_{i,l}-\beta(Y_{i,l}-m_i)\right] = \E[X_{i,l}]-\beta\E[Y_{i,l}-m_i] = p_i. 
\]
Thus the adjusted observation is unbiased for \(p_i\).

For the variance, using \(\Var(X_{i,l})=1\), \(\Var(Y_{i,l}-m_i)=\Var(Y_{i,l})=1\), and
\(\Cov(X_{i,l},Y_{i,l}-m_i)=\Cov(X_{i,l},Y_{i,l})=\rho_i\), we have
\[
\begin{aligned}
\Var\left(X_{i,l}-\beta(Y_{i,l}-m_i)\right)
&= \Var(X_{i,l}) +\beta^2\Var(Y_{i,l}-m_i) -2\beta\Cov(X_{i,l},Y_{i,l}-m_i)\\
&= 1+\beta^2-2\rho_i\beta. 
\end{aligned}
\]
Therefore the variance is minimized at \(\beta=\rho_i\), and the minimum variance is $1-\rho_i^2$.
\end{proof}

\subsection{Proof of Proposition~\ref{prop:ols-concentration}}
\begin{proof}
Fix an arm \(i\) and omit the arm index. Conditional on \(W_1,\ldots,W_s\), the regression model \eqref{eq:ols} is a fixed-design Gaussian linear model $X_l=p+\rho W_l+\epsilon_l$, where $\epsilon_l\stackrel{i.i.d.}{\sim}\mathcal{N}(0,v)$. Let
\[
Z=
\begin{pmatrix}
1 & W_1\\
\vdots & \vdots\\
1 & W_s
\end{pmatrix},
\qquad
\theta=
\begin{pmatrix}
p\\
\rho
\end{pmatrix},
\qquad
X=(X_1,\ldots,X_s)^\top .
\]
Then, \eqref{eq:ols} can be rewritten as
\[
X=Z\theta+\epsilon,
\qquad
\epsilon\mid W_1,\ldots,W_s\sim \mathcal{N}(0,vI_s).
\]
The OLS estimator is $\hat\theta=(Z^\top Z)^{-1}Z^\top X$. Therefore, conditional on \(W_1,\ldots,W_s\),
\[
\hat\theta-\theta = (Z^\top Z)^{-1}Z^\top \epsilon \sim \mathcal{N}\left(0,v(Z^\top Z)^{-1}\right).
\]
The intercept estimator \(\hat p\) is the first coordinate of \(\hat\theta\). Hence
\[
\hat p-p\mid W_1,\ldots,W_s \sim \mathcal{N}\left(0,v\left[(Z^\top Z)^{-1}\right]_{11}\right).
\]
where
\begin{equation*}
\begin{aligned}
\left[(Z^\top Z)^{-1}\right]_{11} &= \begin{bmatrix}\begin{pmatrix}
s & \sum_{l=1}^s W_l\\
\sum_{l=1}^s W_l & \sum_{l=1}^s W_l^2
\end{pmatrix}^{-1} \end{bmatrix}_{11}\\
&= \frac{\sum_{l=1}^s W_l^2}{s\sum_{l=1}^s W_l^2-\left(\sum_{l=1}^s W_l\right)^2} \\
&= \frac{1}{s} + \frac{\bar W^2}{\sum_{l=1}^s W_l^2-s\bar W^2} = h
\end{aligned}
\end{equation*}
Thus
\[
\hat p-p\mid W_1,\ldots,W_s\sim \mathcal{N}(0,vh).
\]

It remains to prove the residual variance statement. Let $P_Z=Z(Z^\top Z)^{-1}Z^\top$ be the projection matrix onto the column space of \(Z\). The residual vector is
\[
X-Z\hat\theta = (I-P_Z)X = (I-P_Z)\epsilon,
\]
because \((I-P_Z)Z\theta=0\). Conditional on \(W_1,\ldots,W_s\),
\[
\frac{1}{v}\|X-Z\hat\theta\|_2^2
=
\frac{1}{v}\epsilon^\top(I-P_Z)\epsilon.
\]
The matrix \(I-P_Z\) is symmetric and idempotent. Since the design has two linearly independent columns whenever \(S_{WW}>0\), we have
\[
\operatorname{rank}(I-P_Z)=s-2.
\]
For a Gaussian vector \(\epsilon\sim \mathcal{N}(0,vI_s)\), the quadratic form with a symmetric idempotent matrix of rank \(s-2\) satisfies
\[
\frac{1}{v}\epsilon^\top(I-P_Z)\epsilon\sim\chi^2_{s-2}.
\]
By the definition \eqref{eq:ols-var}, $\hat v=\|X-Z\hat\theta\|_2^2/(s-2)$, we obtain
\[
\frac{(s-2)\hat v}{v}\sim\chi^2_{s-2}.
\]
This completes the proof.
\end{proof}

\section{Repeated Proxy Calls}\label{sec:independent-proxy}
Consider a limited number of proxy query for $d$ times (where $d$ might be determined by the latency or concurrency in real world sequential decisions), the real reward $X_{i,l}$ is paired with a proxy vector $(Y_{i,l}^{(1)},\dots,Y_{i,l}^{(d)})$, which follow a multivariate gaussian distribution similar to \eqref{eq:proxy-model} as
\begin{equation}
\begin{pmatrix}
    X_{i,l}\\
    Y_{i,l}^{(1)}\\
    \vdots\\
    Y_{i,l}^{(d)}
\end{pmatrix}
\sim
\mathcal{N}\!\left(
    \begin{pmatrix}
        p_i\\
        m_i\mathbf 1_d
    \end{pmatrix},
    \begin{pmatrix}
        1 & \rho_i\mathbf 1_d^\top\\
        \rho_i\mathbf 1_d & \Sigma_d
    \end{pmatrix}
\right),\qquad \rho_i\in(-1,1),
\label{eq:proxy-model-vector}
\end{equation}
where $\Sigma_d$ is a positive semi-definite matrix with unit diagonal. Similar to the single proxy case, we can introduce $W_{i,l}^{(j)}=Y_{i,l}^{(j)}-m_i$ for all $j\in[d]$ then $\E[W_{i,l}^{(j)}]=0$ and $\Var(W_{i,l}^{(j)})=1$. Assuming $Y_{i,l}^{(1)},\dots,Y_{i,l}^{(d)}$ are conditional i.i.d given $X_{i,l}$, the repeated proxies 
\begin{equation*}
\bar W_{i,l}^{(d)}=\frac1d\sum_{j=1}^d W_{i,l}^{(j)},
\end{equation*}
can be reduced to a single averaged proxy, \ie, using all \(d\) proxies as linear control variates is equivalent to using their average  as a single proxy.
\begin{proposition}\label{prop:vector-proxy-reduction} 
Under the multivariate Gaussian model in \eqref{eq:proxy-model-vector}, and assume \(Y_{i,l}^{(1)},\ldots,Y_{i,l}^{(d)}\) are conditionally i.i.d. given \(X_{i,l}\) for all $j\in[d]$. Then, for every \(j\ne k\), $\Cov\left(Y_{i,l}^{(j)},Y_{i,l}^{(k)}\right)=\rho_i^2$, or equivalently 
$$
\Sigma_d=(1-\rho_i^2)I_d+\rho_i^2\mathbf 1_d\mathbf 1_d^\top.
$$
Moreover, using the single averaged proxy \(\bar W_{i,l}^{(d)}\) instead of the full vector. The effective correlation and residual variance are
$$
\rho_i^{(d)} = \frac{\sqrt d\,\rho_i}{\sqrt{1+(d-1)\rho_i^2}}, \qquad v_i^{(d)}=\frac{1-\rho_i^2}{1+(d-1)\rho_i^2}.
$$
\end{proposition}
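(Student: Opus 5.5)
Both claims follow from the conditional-independence structure together with Gaussian conditioning. The averaged-proxy formulas then come from a direct variance computation.

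\textbf{Step 1: the covariance structure.} Fix arm $i$ and pull $l$, and drop indices. Write $W^{(j)}$ for the centered proxies. In a jointly Gaussian vector, conditional i.i.d.\ given $X$ means $W^{(j)} = \mathbb{E}[W^{(j)}\mid X] + \xi_j$, with the $\xi_j$ i.i.d., independent of $X$, and $\mathbb{E}[W^{(j)}\mid X]$ common to all $j$. Since $\Cov(X,W^{(j)})=\rho$ and $\Var(X)=1$, the conditional mean is $\rho(X-p)$. Unit variance of $W^{(j)}$ then forces $\Var(\xi_j)=1-\rho^2$. For $j\neq k$, independence of $\xi_j,\xi_k,X$ gives
\[
\Cov(W^{(j)},W^{(k)})=\rho^2\Var(X)=\rho^2 .
\]
Hence $\Sigma_d=(1-\rho^2)I_d+\rho^2\mathbf 1_d\mathbf 1_d^\top$.

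\textbf{Step 2: moments of the average.} Here $\bar W^{(d)}=\rho(X-p)+\bar\xi$, so $\Cov(X,\bar W^{(d)})=\rho$. Also
\[
\Var(\bar W^{(d)})=d^{-2}\mathbf 1^\top\Sigma_d\mathbf 1=\frac{(1-\rho^2)+d\rho^2}{d}=\frac{1+(d-1)\rho^2}{d}.
\]
Therefore
\[
\rho^{(d)}=\frac{\rho}{\sqrt{(1+(d-1)\rho^2)/d}}=\frac{\sqrt d\,\rho}{\sqrt{1+(d-1)\rho^2}} .
\]

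\textbf{Step 3: residual variance.} The residual variance is $1-(\rho^{(d)})^2$, which equals $1-\frac{d\rho^2}{1+(d-1)\rho^2}=\frac{1-\rho^2}{1+(d-1)\rho^2}$. This is the variance formula of Proposition~\ref{prop:define-var-reduce} after rescaling $\bar W^{(d)}$ to unit variance; the rescaling leaves the OLS fit unchanged up to the slope.

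\textbf{Step 4: optimality of the average (the main point to justify).} It remains to check that no linear combination of all $d$ proxies does better than the average. By exchangeability, the best linear predictor $\Sigma_d^{-1}\rho\mathbf 1$ is proportional to $\mathbf 1$, since $\mathbf 1$ is an eigenvector of $\Sigma_d$ with eigenvalue $1+(d-1)\rho^2$. The full-vector residual variance is
\[
1-\rho^2\mathbf 1^\top\Sigma_d^{-1}\mathbf 1=1-\frac{d\rho^2}{1+(d-1)\rho^2},
\]
which matches Step 3. The averaged proxy is therefore a sufficient linear summary. The only delicate point is Step 1: the common conditional mean and the i.i.d.\ residuals must be justified from conditional i.i.d.\ within the Gaussian family, and this holds because Gaussian conditional laws are determined by their linear regression on $X$.
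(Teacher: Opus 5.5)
Your proposal is correct and follows essentially the same route as the paper: your decomposition $W^{(j)}=\rho(X-p)+\xi_j$ encodes the same vanishing-conditional-covariance (Schur complement) fact the paper uses to get $\Sigma_d=(1-\rho^2)I_d+\rho^2\mathbf 1_d\mathbf 1_d^\top$. Likewise, your eigenvector identity $\Sigma_d\mathbf 1_d=(1+(d-1)\rho^2)\mathbf 1_d$ stands in for the paper's Sherman--Morrison computation of $\alpha^*=\Sigma_d^{-1}\rho\mathbf 1_d\propto\mathbf 1_d$; your Steps 2--3 and the full-vector residual variance in Step 4 go further by verifying the stated formulas for $\rho_i^{(d)}$ and $v_i^{(d)}$ explicitly, which the paper's proof leaves implicit.
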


\begin{proof}
We omit the arm $i$ and pull indices $l$, and write \(X=X_{i,l}\), \(Y^{(j)}=Y_{i,l}^{(j)}\), and \(W^{(j)}=Y^{(j)}-m_i\). Then \(\E[W^{(j)}]=0\), \(\Var(X)=\Var(W^{(j)})=1\), and $\Cov(X,W^{(j)})=\Cov(X,Y^{(j)})=\rho_i$ .

We first identify the covariance matrix of the repeated proxies. Extracting the principal sub-matrix indexed by $(X,W^{(j)},W^{(k)})$ from the covariance matrix in \eqref{eq:proxy-model-vector},  by Schur complement, we have
\begin{equation*}
\begin{aligned}
0 = \Cov(W^{(j)},W^{(k)}\mid X) 
&= \Cov(W^{(j)},W^{(k)}) - \Cov(W^{(j)},X)\Var(X)^{-1}\Cov(X,W^{(k)})\\
&= \Cov(W^{(j)},W^{(k)}) - \rho_i \cdot 1 \cdot \rho_i,
\end{aligned}
\end{equation*}
we obtain \(\Cov(W^{(j)},W^{(k)})=\rho_i^2\) for all $j\neq k$ and $j,k\in[d]$. Equivalently, for \(W=:(W^{(1)},\ldots,W^{(d)})^\top\),
\[
\Cov(W)=(1-\rho_i^2)I_d+\rho_i^2\mathbf 1_d\mathbf 1_d^\top.
\]

We now show that the full vector proxy is equivalent to the averaged proxy. Consider the best linear control variate based on the full vector \(W\),
\[
\alpha^* \in \arg\min_{\alpha\in\mathbb R^d} \Var\left(X-\alpha^\top W\right).
\]
Since \(\E[W]=0\), the solution is
\[
\alpha^*=\Cov(W)^{-1}\Cov(W,X).
\]
Recall that
\[
\Cov(W)=(1-\rho_i^2)I_d+\rho_i^2\mathbf 1_d\mathbf 1_d^\top,
\qquad
\Cov(W,X)=\rho_i\mathbf 1_d.
\]
By the Sherman-Morrison formula,
\[
\Cov(W)^{-1} = \frac{1}{1-\rho_i^2}I_d - \frac{\rho_i^2} {(1-\rho_i^2)(1+(d-1)\rho_i^2)} \mathbf 1_d\mathbf 1_d^\top.
\]
Therefore,
\[
\alpha^* = \left( \frac{1}{1-\rho_i^2}I_d - \frac{\rho_i^2} {(1-\rho_i^2)(1+(d-1)\rho_i^2)} \mathbf 1_d\mathbf 1_d^\top \right)\rho_i\mathbf 1_d = \frac{\rho_i}{1+(d-1)\rho_i^2}\mathbf 1_d. \]
Thus the optimal vector control term satisfies
\[
(\alpha^*)^\top W = \frac{\rho_i}{1+(d-1)\rho_i^2} \sum_{j=1}^d W^{(j)} = \frac{d\rho_i}{1+(d-1)\rho_i^2}\bar W^{(d)}.
\]
Therefore the optimal linear control variate based on the full proxy vector only depends on the averaged proxy \(\bar W^{(d)}\), which proves the claimed equivalence.
\end{proof}

Proposition~\ref{prop:vector-proxy-reduction} shows that the averaged proxy $\bar W_{i,l}^{(d)}$ is a sufficient summary of the whole proxy vector for the control-variate adjustment: the best linear adjustment built from $(Y_{i,l}^{(1)},\dots,Y_{i,l}^{(d)})$ coincides with the one built from their average alone. Consequently, the $d$-proxy observation reduces exactly to a single-proxy observation of the form \eqref{eq:proxy-model}, with the correlation and residual variance replaced by the effective quantities $\rho_i^{(d)}$ and $v_i^{(d)}$. Therefore, we are able to develop the algorithm and its analysis for a single proxy per pull without loss of generality, and recover the guarantee for any fixed $d$ by substituting $\rho_i^{(d)}$ and $v_i^{(d)}$ for $\rho_i$ and $v_i$.

\end{appendices}
\end{document}